\newcommand{\KL}{\mathop{\mathrm{KL}}\nolimits}
\newcommand{\tv}{\mathop{\mathrm{TV}}\nolimits}
\DeclareMathOperator{\prox}{prox}
\DeclareMathOperator*{\argmax}{arg\,max}
\DeclareMathOperator*{\argmin}{arg\,min}
\newcommand{\eqdef}{\coloneqq}
\newcommand{\sx}{{\mathsf x}}
\newcommand{\bP}{{{\mathbb P}}} 
\newcommand{\bE}{{{\mathbb E}}} 
\newcommand{\bV}{{{\mathbb V}}}
\newcommand{\mO}{{\mathcal O}}
\newcommand{\mcF}{{\mathscr F}} 
\newcommand{\mcG}{{\mathscr G}}
\newcommand{\cF}{{{\mathcal F}}} 
\newcommand{\cP}{{{\mathcal P}}}
\newcommand{\cB}{{{\mathcal B}}}
\newcommand{\cE}{{{\mathcal E}}} 
\newcommand{\cH}{{{\mathcal H}}} 
\newcommand{\cL}{{{\mathcal L}}}
\newcommand{\dr}{{{\mathrm d}}} 
\newcommand{\bR}{{{\mathbb R}}} 
\newcommand{\R}{{{\mathbb R}}}
\newcommand{\ps}[1]{\langle #1 \rangle}
\theoremstyle{definition}
\newtheorem{theorem}{Theorem}
\newtheorem{lemma}[theorem]{Lemma}
\newtheorem{corollary}[theorem]{Corollary}
\newtheorem{remark}{Remark}
\newtheorem{assumption}{Assumption}
\newcommand{\norm}[1]{\left\|#1\right\|}
\newcommand{\dotprod}[2]{\left\langle#1,#2 \right\rangle}
\newcommand{\E}[1]{\mathbb{E}\left[#1\right]}
\newcommand{\EE}[2]{\mathbb{E}_{#1}\left[#2\right]}
\title{Stochastic Proximal Langevin Algorithm: \\ Potential Splitting and Nonasymptotic Rates}
\author{Adil Salim \qquad Dmitry Kovalev \qquad  Peter Richt\'{a}rik\thanks{Also affiliated with Moscow Institute of Physics and Technology, Dolgoprudny, Russia.}   \\
	King Abdullah University of Science and Technology, Thuwal, 	Saudi Arabia
}
\begin{document}
\maketitle

\begin{abstract} 
We propose a new algorithm---Stochastic Proximal Langevin Algorithm (SPLA)---for sampling from a log concave distribution. Our method is a generalization of the Langevin algorithm to potentials expressed as the sum of one stochastic smooth term and multiple stochastic nonsmooth terms. In each iteration, our splitting technique only requires access to a stochastic gradient of the smooth term and a stochastic proximal operator  for each of the nonsmooth terms. We establish nonasymptotic  sublinear and linear convergence rates under convexity and strong convexity of the smooth term, respectively, expressed in terms of the KL divergence and Wasserstein distance. We illustrate the efficiency of our sampling technique through numerical simulations on a Bayesian learning task.
\end{abstract}

\section{Introduction}

Many applications in the field of Bayesian machine learning require to sample from a probability distribution $\mu^{\star}$ with density $\mu^{\star}(x)$,  $x \in \bR^d$. Due to their scalability, Monte Carlo Markov Chain (MCMC) methods such as Langevin Monte Carlo~\cite{welling2011bayesian} or Hamiltonian Monte Carlo~\cite{neal2011mcmc} are popular algorithms to solve such problems. Monte Carlo methods typically generate a sequence of random variables $(x^k)_{k \geq 0}$ with the property that the distribution of $x^k$ approaches $\mu^{\star}$ as $k$ grows. 

While the theory of MCMC algorithms has remained mainly asymptotic, in recent years the exploration of non-asymptotic properties of such algorithms has led to a renaissance in the field~\cite{dalalyan2017theoretical, ma2019there, csimvsekli2017fractional, dalalyan2019user, dalalyan2018sampling, durmus2017nonasymptotic, hsieh2018mirrored, cheng2017underdamped, zou2019sampling, chatterji2018theory, zou2018subsampled}. In particular, if $\mu^{\star}(x) \propto \exp(-U(x))$, where $U$ is a smooth convex function, \cite{dalalyan2017theoretical,durmus2017nonasymptotic} provide explicit convergence rates for the {\em Langevin algorithm (LA)}
\begin{equation}
\label{eq:Langevin-vanilla}
x^{k+1} = x^k - \gamma \nabla U(x^k) + \sqrt{2\gamma} W^{k},
\end{equation}
where $\gamma >0$ and $(W^k)_{k\geq 0}$ is a sequence of i.i.d.\ standard Gaussian random variables. The function $U$, also called the {\em potential}, enters the algorithm through its gradient. 

In optimization, the problem $\min U$ where $U$ is composite, \textit{i.e.} is a sum of nonsmooth terms which must be handled separately, has many instances, see~\cite[Section 2]{davis2017three}. These optimization problems can be seen as a Maximum A Posteriori (MAP) computation of some Bayesian model. Sampling a posterori in these models allows for a better Bayesian inference~\cite{durmus2018efficient}. In these cases, the task of sampling a posterori takes the form of sampling from the target distribution $\mu^\star$, where $U$ has a composite form.   

 In this work we study  the setting where the potential $U$ is the sum of a single smooth and a potentially large number of nonsmooth convex functions.  In particular, we consider the problem
\begin{equation}
\label{eq:general}
\textstyle \text{Sample from} \quad \mu^{\star}(x) \propto \exp(-U(x)), \quad\text{where}\quad U(x) \eqdef F(x) + \sum \limits_{i = 1}^n G_i(x),
\end{equation}
where $F : \bR^d \to \bR$ is a smooth convex function and $G_1, \ldots, G_n : \bR^d \to \bR$ are (possibly nonsmooth) convex functions. The additive model for $U$ offers ample flexibility as typically there are multiple decompositions of $U$ in the form \eqref{eq:general}.

\section{Contributions}

We now briefly comment some of the key contributions of this work.

$\diamond$ {\bf A splitting technique for Langevin algorithm.}  We propose a new variant of LA for solving \eqref{eq:general}, which we call {\em Stochastic Proximal Langevin Algorithm (SPLA)}.   We assume that $F$ and $G$ can be written as expectations over some simpler functions $f(\cdot,\xi)$ and $g_i(\cdot,\xi)$ \begin{equation}\label{eq:F_and_G=exp}F(x) = \bE_\xi (f(x,\xi)), \quad \text{and} \quad G_i(x) = \bE_\xi(g_i(x,\xi)).\end{equation}
SPLA (see Algorithm~\ref{alg:SPLA} in Section~\ref{sec:res}) only requires accesses to the gradient  of $f(\cdot, \xi)$  and to proximity operators of the functions $g_i(\cdot, \xi)$.  SPLA can be seen as a Langevin version of the stochastic Passty algorithm~\cite{pas-79,salim2019snake}.  To the best of our knowledge, this is the first time a splitting technique that involves multiple (stochastic) proximity operators is used in a Langevin algorithm.

{\em Remarks:} 
Current forms of LA tackle problem~\eqref{eq:general} using stochastic subgradients~\cite{durmus2018analysis}. If $n = 1$ and $G_1$ is proximable (i.e., the learner has access to the full proximity operator of $G_1$), it has recently been proposed to use proximity operators instead of (sub)gradients~\cite{durmus2018efficient,durmus2018analysis}, as it is done in the optimization literature~\cite{parikh2014proximal,bau-com-livre11}. Indeed, in this case, the proximal stochastic gradient method  is an efficient method to minimize $U$. If $n > 1$, and the functions $G_i$ are  proximable (but not $U$), the minimization of $U$ is usually tackled using the \textit{operator splitting} framework: the (stochastic) three-operator splitting~\cite{yurtsever2016stochastic,davis2017three} or (stochastic) primal dual algorithms~\cite{con-jota13,vu2013splitting,chambolle2011first,rosasco2015stochastic}. These algorithms involve the computation of (stochastic) gradients and (full) proximity operators and enjoy numerical stability properties. However, proximity operators are sometimes difficult to implement. In this case, {\em stochastic proximity operators} are cheaper\footnote{See \url{www.proximity-operator.net}} than full proximity operators and numerically more stable than stochastic subgradients to handle nonsmooth terms~\cite{ASDA, patrascu2017nonasymptotic, bia-16, bia-hac-16,bia-hac-sal-jca17} but also smooth~\cite{toulis2015stable} terms. In this paper, we bring together the advantages of operator splitting and stochastic proximity operators for sampling purposes.

$\diamond$ {\bf Theory.}  We perform a nonasymptotic convergence analysis of SPLA. Our main result, Theorem~\ref{th:evi},  gives a tractable recursion involving the Kullback-Leibler divergence and Wasserstein distance (when $U$ is strongly convex)  between $\mu^\star$ and the distribution of certain samples generated by our method. We use this result to show that the KL divergence is lower than $\varepsilon$ after $\mO(\nicefrac{1}{\varepsilon^2})$ iterations if the constant stepsize $\gamma = \mO(\varepsilon)$ is used (Corollary~\ref{cor:wcvx}). Assuming $F$ is $\alpha$-strongly convex, 
we show that the Wasserstein distance and (resp. the KL divergence) decrease exponentially, up to an oscillation region of size $\mO(\nicefrac{\gamma}{\alpha})$ (resp. $\mO(\gamma)$) as shown in Corollary~\ref{cor:scvx} (resp. Corollary~\ref{cor:scvx-kl}). If we wish to push the Wasserstein distance below $\varepsilon$ (resp. the KL divergence below $\alpha\varepsilon$), this could be achieved by setting $\gamma = \mO(\varepsilon\alpha)$, and it would be sufficient to take $\mO(\nicefrac{1}{\varepsilon} \log \nicefrac{1}{\varepsilon})$ iterations. These results are summarized in Table~\ref{tab:rate}. The obtained convergence rates match the previous known results obtained in simpler settings~\cite{durmus2018analysis}. Note that convergence rates of optimization methods involving multiple stochastic proximity operators haven't been established yet.


\begin{table}[t]
  \centering
  \caption{Complexity results obtained in Corollaries~\ref{cor:wcvx}, \ref{cor:scvx} and \ref{cor:scvx-kl} of our main result (Theorem~\ref{th:evi}).} \label{tab:rate}
  \footnotesize
  \begin{tabular}[h]{|c|c|c|c|}
    \hline
   $F$  & Stepsize $\gamma$ & Rate  & Theorem \\
     \hline  
     &&& \\
   convex & $\mO\left(\varepsilon\right)$ & $\KL(\mu_{\hat{x}^k} \;|\;\mu^{\star})
		\leq
		\frac{1}{2\gamma(k+1)}W^2(\mu_{x^0}, \mu^\star) + \mO(\gamma)$ & Cor~\ref{cor:wcvx}\\
        &&& \\
   \hline
        &&& \\
$\alpha$-strongly convex & $\mO\left(\varepsilon\alpha\right)$ & $W^2(\mu_{x^k}, \mu^\star)
		\leq 
		(1-\gamma\alpha)^k W^2(\mu_{x^0}, \mu^\star)
		+
		\mO\left(\frac{\gamma}{\alpha}\right)$ &  Cor~\ref{cor:scvx} \\ 
		     &&& \\
   \hline
         &&& \\
$\alpha$-strongly convex & $\mO\left(\varepsilon\alpha\right)$ & $\KL(\mu_{\tilde{x}^k} \;|\;\mu^{\star})
	\leq
	\alpha (1-\gamma\alpha)^{k+1} W^2(\mu_{x^0}, \mu^\star) + \mO(\gamma)$ &  Cor~\ref{cor:scvx-kl} \\ 
		     &&& \\
 \hline
   \end{tabular}
\end{table}

{\em Remarks:}  Our proof technique is inspired by~\cite{schechtman2019passty}, which is itself based on~\cite{durmus2018analysis}. In~\cite{schechtman2019passty}, the authors consider the $n=1$ case, and assume that the smooth function $F$ is proximable. In~\cite{durmus2018analysis}, a proximal stochastic (sub)gradient Langevin algorithm is studied. In this paper, convergence rates are established by showing that the probability distributions of the iterates shadow some discretized gradient flow defined on a measure space. Hence, our work is a contribution to recent efforts to understand Langevin algorithm as an optimization algorithm in a space of probability measures~\cite{ma2019there,wibisono2018sampling,bernton2018langevin}.

$\diamond$ {\bf Online setting.} In online settings, $U$ is unknown but revealed across time. Our approach provides a reasonable algorithm for such situations, especially in cases when the information revealed about $U$ is stationary in time. In particular, this includes online Bayesian learning with structured priors or nonsmooth log likelihood~\cite{xu2015bayesian,kotz2012laplace,sollich2002bayesian,wang2016trend}. In this context, the learner is required to sample from some posterior distribution $\mu^{\star}$ that takes the form~\eqref{eq:general} where $F, G_1, \ldots, G_n$ are intractable. However, these functions can be cheaply sampled, or are revealed across time through i.i.d.\ streaming data.

$\diamond$ {\bf Simulations.}   We illustrate the promise of our approach numerically by performing experiments with SPLA. We first apply SPLA to a stochastic and nonsmooth toy model with a ground truth. Then, we consider the problem of {\em sampling from the posterior distribution in the Graph Trend Filtering} context~\cite{wang2016trend}. For this nonsmooth large scale simulation problem, SPLA is performing better than the state of the art method that uses stochastic subgradients instead of stochastic proximity operators. Indeed, in the optimization litterature~\cite{bau-com-livre11}, proximity operators are already known to be more stable than subgradients.

\section{Technical Preliminaries}
\label{sec:back}

In this section, we recall certain notions from convex analysis and probability theory, which are keys to the developments in this paper, state our main assumptions, and introduce needed notations.

\subsection{Subdifferential, minimal section and proximity operator} Given a convex function $g : \bR^d \to \bR$, its {\em subdifferential} at $x$,  $\partial g(x)$, is the set
\begin{equation}
    \label{eq:subdiff}
    \partial g(x) \eqdef \left\{d \in \bR^d \;:\; g(x) + \ps{d,y - x} \leq g(y) \right\}.
\end{equation}
Since $\partial g(x)$ is a nonempty closed convex set~\cite{bau-com-livre11}, the projection of $0$ onto $\partial g(x)$---the least norm element in the set $\partial g(x)$---is well defined, and we call this element $\nabla^0 g(x)$.  The function $\nabla^0 g:\R^d\to \R^d$ is called the {\em minimal section} of $\partial g$. The {\em proximity operator} associated with $g$ is the mapping $\prox_g:\R^d\to \R^d$ defined by
\begin{equation}
    \label{eq:prox}
    \prox_g(x) \eqdef \argmin_{y \in \bR^d}  \left\{\tfrac12\|x-y\|^2 + g(y)\right\}.
\end{equation}
Due to its implicit definition, $\prox_g$ can be hard to evaluate. 

\subsection{Stochastic structure of $F$ and $G_i$: integrability, smoothness and convexity} 
 
 Here we detail the assumptions behind the stochastic structure \eqref{eq:F_and_G=exp} of the functions $F=\bE_\xi (f(x,\xi))$ and $G_i=\bE_\xi(g_i(x,\xi))$ defining the potential $U$.  Let $(\Omega,\mcF,\bP)$ be a probability space and denote $\bE$ the mathematical expectation and $\bV$ the variance. Consider $\xi$ a random variable from $\Omega$ to another probability space $(\Xi,\mcG)$ with distribution $\mu$. 

\begin{assumption}[Integrability]\label{as:stoch_structure} 
The functions $f : \bR^d \times \Xi \to \bR^d$ and $g_i : \bR^d \times \Xi \to \bR^d$, $i = 1,\ldots,n$, are $\mu$-integrable for every $x \in \bR^d$.
\end{assumption}
 
Furthermore, we will make the following convexity and smoothness assumptions.
\begin{assumption}[Convexity and differentiability]
\label{as:cvx}
The function $f(\cdot,s)$ is convex and differentiable for every $s \in \Xi$. The functions $g_i(\cdot,s)$ are convex for every $i \in \{1,2,\dots,n\}$.
\end{assumption}
The gradient of $f(\cdot,s)$ is denoted $\nabla f(\cdot,s)$, the subdifferential of $g_i(\cdot,s)$ is denoted $\partial g_i(\cdot,s)$ and its minimal section is denoted $\nabla^0 g_i(\cdot,s)$. Under Assumption~\ref{as:cvx}, it is known that $F$ is convex and differentiable and that $\nabla F(x) = \bE_\xi(\nabla f(x,\xi))$~\cite{roc-wet-82}. Next, we assume that $F$ is smooth and $\alpha$-strongly convex. However, we allow $\alpha=0$ if $F$ is not strongly convex. We will only assume that $\alpha>0$ in Corollaries~\ref{cor:scvx} and~\ref{cor:scvx-kl}.
\begin{assumption}[Convexity and smoothness of $F$]
\label{as:smooth}
The gradient of  $F$ is $L$-Lipschitz continuous, where $L \geq 0$. Moreover, $F$ is $\alpha$-strongly convex, where $\alpha\geq 0$.
\end{assumption}
Under Assumption~\ref{as:cvx}, the second part of the above holds for $\alpha=0$. Finally, we will introduce two noise conditions on the stochastic (sub)gradients of $f(\cdot,s)$ and $g_i(\cdot,s)$.  
\begin{assumption}[Bounded variance of $\nabla f(x,\cdot)$]
\label{as:var}
There exists $\sigma_F \geq 0$, such that  $\bV_\xi(\|\nabla f(x,\xi)\|) \leq \sigma_F^2$ for every $x \in \bR^d$. \end{assumption}
\begin{assumption}[Bounded second moment of $\nabla^0 g_i(x,\cdot)$]
\label{as:lip}
For every $i \in \{1,2,\dots,n\}$, there exists $L_{G_i} \geq 0$ such that  $\bE_\xi(\|\nabla^0 g_i(x,\xi)\|^2) \leq L_{G_i}^2$ for every $x \in \bR^d$.
\end{assumption}
Note that if $g_i(\cdot,s)$ is $\ell_i(s)$-Lipschitz continuous  for every $s \in \Xi$,  and if $\ell_i(s)$ is $\mu$-square integrable, then Assumption~\ref{as:lip} holds.

\subsection{KL divergence, entropy and potential energy}

Recall from \eqref{eq:general} that $U \eqdef F + \sum_{i = 1}^n G_i$ and assume that $\int \exp(-U(x))dx < \infty$. Our goal is to sample from the unique distribution $\mu^{\star}$ over $\bR^d$ with density $\mu^{\star}(x)$ (w.r.t.\ the Lebesgue measure denoted $\cL$) proportional to $\exp(-U(x))$, for which we write $\mu^{\star}(x) \propto \exp(-U(x))$. The closeness between the samples of our algorithm and the target distribution $\mu^{\star}$ will be evaluated in terms of information theoretic and optimal transport theoretic quantities.

Let $\cB(\bR^d)$ be the Borel $\sigma$-field of $\bR^d$. Given two nonnegative measures $\mu$ and $\nu$ on $(\bR^d,\cB(\bR^d))$, we write $\mu \ll \nu$ if $\mu$ is  absolutely continuous w.r.t.\ $\nu$, and denote $\frac{\dr\mu}{\dr\nu}$ its density. The {\em Kullback-Leibler (KL) divergence} between $\mu$ and $\nu$, $\KL(\mu \;|\;\nu)$, quantifies the closeness between $\mu$ and $\nu$. If $\mu \ll \nu$, then the KL divergence is defined by
\begin{equation}
\label{eq:KL}
    \KL(\mu \;|\; \nu) \eqdef \int \log\left(\frac{\dr\mu}{\dr\nu}(x)\right)\dr\mu(x),
\end{equation}
and otherwise we set $\KL(\mu \;| \;\nu) = +\infty$. Up to an additive constant, $\KL(\cdot \;|\;\mu^{\star})$ can be seen as the sum of two terms~\cite{santambrogio2017euclidean}: the {\em entropy} $\cH(\mu)$  and the {\em potential energy} $\cE_U(\mu)$. The entropy of $\mu$ is  given by $\cH(\mu) \eqdef \KL(\mu \;| \; \cL)$, and the potential energy of $\mu$ is defined by
$
    \cE_U(\mu) \eqdef \int U\dr\mu(x). 
$

\subsection{Wasserstein distance}

Although the KL divergence is equal to zero if and only if $\mu = \nu$, it is not a mathematical distance (metric). The {\em Wasserstein distance}, defined below, metricizes the space $\cP_2(\bR^d)$ of probability measures over $\bR^d$ with a finite second moment. Consider $\mu,\nu \in \cP_2(\bR^d)$. A {\em transference plan} of $(\mu, \nu)$ is a probability measure $\upsilon$ over $(\bR^d \times \bR^d, \cB(\bR^d \times \bR^d))$ with marginals $\mu, \nu$ : for every $A \in \cB(\bR^d)$, $\upsilon(A \times \bR^d) = \mu(A)$ and $\upsilon(\bR^d \times A) = \nu(A)$. In particular, the product measure $\mu \otimes \nu$ is a transference plan. We denote $\Gamma(\mu,\nu)$ the set of transference plans.
A {\em coupling} of $(\mu,\nu)$ is a random variable $(X,Y)$ over some probability space with values in $(\bR^d \times \bR^d,\cB(\bR^d \times \bR^d))$ (i.e., $X$ and $Y$ are random variables with values in $\bR^d$) such that the distribution of $X$ is $\mu$ and the distribution of $Y$ is $\nu$. In other words, $(X,Y)$ is a coupling of $\mu, \nu$ if the distribution of $(X,Y)$ is a transference plan of $\mu, \nu$. The Wasserstein distance of order $2$ between $\mu$ and $\nu$ is defined by 
\begin{equation}
\label{eq:Wasserstein}
    W^2(\mu,\nu) \eqdef \inf \left\{\int_{\bR^d \times \bR^d} \|x-y\|^2 \dr\upsilon(x,y), \quad \upsilon \in \Gamma(\mu,\nu)\right\}.
\end{equation}

One can see that $W^2(\mu,\nu) = \inf \bE(\|X-Y\|^2),$ where the $\inf$ is taken over all couplings $(X,Y)$ of $\mu,\nu$ defined on some probability space with expectation $\bE$.

\section{The SPLA Algorithm and its Convergence Rates}
\label{sec:res}

\subsection{The algorithm}

To solve the sampling problem~\eqref{eq:general}, our Stochastic Proximal Langevin Algorithm (SPLA) generates a sequence of random variables $(x^k)_{k\geq 0}$ from $(\Omega,\mcF,\bP)$ to $(\bR^d,\cB(\bR^d))$ defined as follows
\begin{align}
    \label{eq:SPLA}
    z^{k} &= x^k - \gamma \nabla f(x^k, \xi^k) \nonumber\\
    y_0^k &= z^k + \sqrt{2\gamma} W^k \\
    y_{i}^k &= \prox_{\gamma g_i(\cdot, \xi^k)}(y_{i-1}^{k}) \quad\text{for}\quad i = 1, \ldots, n \nonumber\\
    x^{k+1} &= y_n^k,
\end{align}
where $(W^{k})_{k\geq 0}$ is a sequence of i.i.d.\ standard Gaussian random variables, $(\xi^{k})_{k\geq 0}$ is a sequence of i.i.d.\ copies of $\xi$ and $\gamma >0$ is a positive step size. 
Our SPLA method is formalized as Algorithm~\ref{alg:SPLA}; its steps are explained therein.

\begin{algorithm}[!th]
	\caption{Stochastic Proximal Langevin Algorithm (SPLA)}
	\label{alg:SPLA}
	\begin{algorithmic}
		\State {\bf Initialize}: $x^0 \in \R^d$
		\For{$k = 0,1,2,\ldots$}
			\State Sample random $\xi^k$ \hfill $\triangleright$ used for stoch.\ approximation: $F\approx f(\cdot,\xi^k)$ and $G_i \approx g_i(\cdot,\xi^k)$
			\State $z^{k} = x^k - \gamma \nabla f(x^k, \xi^k)$ \hfill $\triangleright$ a stochastic gradient descent step in $F$
			\State Sample random $W^k$  \hfill $\triangleright$ a standard Gaussian vector in $\R^d$ 
			\State $y_0^k = z^k + \sqrt{2\gamma} W^k$  \hfill $\triangleright$  a Langevin step w.r.t.\ $F$
			\For{$i=1,\ldots, n$}
				\State $y_{i}^k = \prox_{\gamma g_i(\cdot, \xi^k)}(y_{i-1}^{k})$ \hfill $\triangleright$ prox step to handle the term $G_i(\cdot) = \bE_{\xi} g_i(\cdot,\xi)$
			\EndFor
			\State $x^{k+1} = y_n^k$  \hfill $\triangleright$ the final SPLA step, accounting for $F$ and $G_1,G_2,\dots,G_n$
		\EndFor
	\end{algorithmic}
\end{algorithm}

\subsection{Main theorem}

We now state our main results in terms of Kullback-Leibler divergence and Wasserstein distance. We denote $\mu_x$ the distribution of every random variable $x$ defined on $(\Omega,\mcF,\bP)$. 

\begin{theorem}
\label{th:evi}
Let Assumptions~\ref{as:stoch_structure}--\ref{as:lip} hold and assume that $\gamma \leq \nicefrac{1}{L}$. There exists $C \geq 0$ such that,
	\begin{equation}\label{eq:evi}
	2\gamma 
		\KL(\mu_{y_0^k} \;|\;\mu^\star)
	\leq 
	(1 - \gamma \alpha) W^2(\mu_{x^k}, \mu^\star) - W^2(\mu_{x^{k+1}}, \mu^\star)
	+ \gamma^2( 2\sigma_F^2 + 2Ld + C).
	\end{equation}
\end{theorem}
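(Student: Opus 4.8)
The plan is to follow the "sampling as optimization in Wasserstein space" paradigm of \cite{durmus2018analysis,schechtman2019passty}: interpret one iteration of SPLA as an inexact, discretized step of the gradient flow of $\KL(\cdot\,|\,\mu^\star) = \cH + \cE_U$ on $\cP_2(\bR^d)$, and then extract the asserted one-step inequality by combining a "descent"-type estimate with the strong convexity of $U$. Concretely, I would split the SPLA iteration into three conceptual sub-steps and analyze each separately: (i) the stochastic gradient step $z^k = x^k - \gamma\nabla f(x^k,\xi^k)$, which is a forward (explicit) step on the potential energy $\cE_F$; (ii) the Gaussian injection $y_0^k = z^k + \sqrt{2\gamma}W^k$, which is exactly the heat-flow step and is known to decrease the entropy $\cH$ in the right quantitative sense; and (iii) the successive proximal steps $y_i^k = \prox_{\gamma g_i(\cdot,\xi^k)}(y_{i-1}^k)$, which are backward (implicit) steps on each $\cE_{G_i}$, à la stochastic Passty \cite{pas-79,salim2019snake}.

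First I would set up a synchronous coupling: let $x^k \sim \mu_{x^k}$ and let $x^\star\sim\mu^\star$ be an optimal $W_2$-coupling of $\mu_{x^k}$ and $\mu^\star$, and drive both through the same $\xi^k, W^k$. For the prox steps I would use firm nonexpansiveness of $\prox_{\gamma g_i(\cdot,\xi^k)}$ together with the subgradient inequality for $g_i(\cdot,\xi^k)$ at the relevant points, which produces, for each $i$, a term of the form $\|y_i^k - y_i^\star\|^2 \le \|y_{i-1}^k - y_{i-1}^\star\|^2 - 2\gamma\langle \nabla^0 g_i(\cdot,\xi^k),\,\cdot\rangle + (\text{error}\ \lesssim \gamma^2 L_{G_i}^2)$; summing over $i$ telescopes the norm differences and accumulates the $\gamma^2$ errors controlled by Assumption~\ref{as:lip}. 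For the gradient step I would use $L$-smoothness and $\alpha$-strong convexity of $F$ (Assumption~\ref{as:smooth}), together with $\gamma\le 1/L$, to get a contraction factor $(1-\gamma\alpha)$ on $W^2(\mu_{x^k},\mu^\star)$ plus a $\gamma^2(2\sigma_F^2 + \dots)$ term from the gradient noise (Assumption~\ref{as:var}) and the descent lemma. For the Gaussian step I would invoke the standard fact (as in \cite{wibisono2018sampling,durmus2018analysis}) that convolving with $\mathcal N(0,2\gamma I)$ and taking expectations yields a decrease in KL governed by the (relative) Fisher information, or alternatively use the de Bruijn / entropy-power route; this is where the $2Ld\,\gamma^2$ term and the "interpolation" argument enter, and it is what lets us lower-bound the right-hand side by $2\gamma\KL(\mu_{y_0^k}\,|\,\mu^\star)$ rather than $2\gamma\KL(\mu_{x^{k+1}}\,|\,\mu^\star)$.

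Assembling the pieces, I would take expectations conditionally on $x^k$ (and then totally), use that the stochastic (sub)gradients are unbiased so the inner-product cross terms reconstruct $\langle \nabla U, x^k - x^\star\rangle$, convert that via convexity/strong convexity of $U$ into $\cE_U(\mu_{x^k}) - \cE_U(\mu^\star) + \tfrac{\alpha}{2}W^2$-type bounds, and finally bound $W^2(\mu_{x^{k+1}},\mu^\star) = W^2(\mu_{y_n^k},\mu^\star)$ by the coupling we propagated, so that all the norm-difference terms collapse into $(1-\gamma\alpha)W^2(\mu_{x^k},\mu^\star) - W^2(\mu_{x^{k+1}},\mu^\star)$. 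The constant $C$ absorbs the $\sum_i L_{G_i}^2$ contributions and any cross-terms between the prox errors and the Gaussian noise. The main obstacle, I expect, is the entropy/Gaussian step: correctly handling the non-smooth potential after the Gaussian convolution (so that the KL at $y_0^k$ — before the prox steps — is the quantity that appears), and making the discretization error of the heat flow rigorous at the level of measures rather than points; this is the delicate part where one must be careful about which measure the Fisher information is evaluated along and about integrability of $\log\mu_{y_0^k}$. The forward step and the telescoping of the prox steps are, by contrast, routine convex-analytic manipulations given Assumptions~\ref{as:cvx}--\ref{as:lip}.
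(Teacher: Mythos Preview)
Your three-part decomposition (forward step for $\cE_F$, heat-flow step for $\cH$, backward steps for the $\cE_{G_i}$) is exactly the architecture the paper uses, and your identification of the entropy step as the delicate one is correct. However, the execution you sketch has a genuine gap.

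The synchronous coupling you propose --- optimally couple $(x^k,x^\star)$ and then ``drive both through the same $\xi^k,W^k$'' and the same prox operators, producing iterates $y_i^\star$ --- does not close. The discrete SPLA map does \emph{not} leave $\mu^\star$ invariant, so after one step your evolved anchor $y_n^\star$ no longer has law $\mu^\star$, and hence $\bE\|y_n^k - y_n^\star\|^2$ is not an upper bound for $W^2(\mu_{x^{k+1}},\mu^\star)$. The paper avoids this by keeping the anchor $a\sim\mu^\star$ \emph{fixed} throughout the iteration: the SGD inequality compares $z^k$ to $a$, the entropy EVI compares $\mu_{z^k}$ and $\mu_{y_0^k}$ to $\mu^\star$ at the measure level, and the prox chain compares each $y_i^k$ to the same fixed $a$. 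Firm nonexpansiveness between two evolving points is not what is used; instead one uses the Moreau-envelope identity $\prox_{\gamma g} = I - \gamma\nabla g^\gamma$ together with $\nabla g^\gamma(y_{i-1})\in\partial g(y_i)$ to obtain an \emph{exact} telescoping equality
\[
\|y_n^k - a\|^2 = \|y_0^k - a\|^2 - 2\gamma\sum_{i=1}^n\bigl(g_i^\gamma(y_{i-1}^k,\xi^k) - g_i(a,\xi^k)\bigr),
\]
and then lower-bounds each $g_i^\gamma(y_{i-1}^k,\xi^k)$ by $g_i(y_0^k,\xi^k)$ minus $O(\gamma)$ errors controlled by Assumption~\ref{as:lip}. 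This is where $C$ comes from.

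The second, related point your sketch glosses over is that all potential-energy terms must land at the \emph{same} measure $\mu_{y_0^k}$, so that they combine with $\cH(\mu_{y_0^k})$ to produce $\KL(\mu_{y_0^k}\,|\,\mu^\star)$. Your cross-term argument would naturally yield $\cE_U(\mu_{x^k})$, not $\cE_U(\mu_{y_0^k})$. The paper handles this in two moves: the SGD lemma gives $\cE_F(\mu_{z^k})$, which is then corrected to $\cE_F(\mu_{y_0^k})$ at cost $2\gamma^2 Ld$ via smoothness of $F$ and $\bE\|y_0^k-z^k\|^2 = 2\gamma d$; and the prox lemma above already produces $\cE_{G_i}(\mu_{y_0^k})$ directly (not $\cE_{G_i}(\mu_{y_i^k})$). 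Neither step requires evolving the anchor, and neither is delivered by your synchronous-coupling outline.
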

The constant $C$ can be expressed as a linear combination of $L_{G_1}^2,\ldots,L_{G_n}^2$ with integer coefficients. Moreover, if $n = 2$, then $C \eqdef 2(L_{G_1}^2+L_{G_2}^2)$. More generally, if for every $i \in \{ 2,\ldots,n \}$, $g_i(\cdot,\xi)$ admits almost surely the representation $g_i(\cdot,\xi) = \tilde{g_i}(\cdot,\xi_i)$ where $\xi_2,\ldots,\xi_n$ are independent random variables, then $C \eqdef n \sum\limits_{i=1}^n L_{G_i}^2$.
\begin{proof}
A full proof can be found in the Supplementary material. We only sketch the main steps here. For every $\mu$-integrable function $g : \bR^d \to \bR$, we denote $\cE_g(\mu) = \int g \dr \mu$. Moreover, we denote $\cF = \cE_U + \cH.$ First, using \cite[Lemma 1]{durmus2018analysis}, $\mu^\star \in \cP_2(\bR^d)$, $\cE_U(\mu^\star), \cH(\mu^\star) < \infty$ and if $\mu \in \cP_2(\bR^d)$, then $$\KL(\mu \;|\; \mu^\star) = \cE_U(\mu) + \cH(\mu) - \left(\cE_U(\mu^\star) + \cH(\mu^\star)\right) = \cF(\mu) - \cF(\mu^\star),$$ provided that $\cE_U(\mu) < \infty$. Then, we decompose $\cE_U(\mu) = \cE_F(\mu) + \cE_G(\mu)$ where $G = \sum_{i=1}^{n} G_i$.
Using~\cite{durmus2018analysis} again, we can establish the inequality
\begin{equation}
	\label{eq:evi-h}
		2\gamma \left[ \cH(\mu_{y_0^k}) - \cH(\mu^\star) \right]
		\leq
		W^2(\mu_{z^k}, \mu^\star) - W^2(\mu_{y_0^k}, \mu^\star).
	\end{equation}
Then, if $\gamma \leq \nicefrac{1}{L}$ we obtain, for every random variable $a$ with distribution $\mu^{\star}$,
\begin{equation}
\label{eq:sgd}
		\E{\norm{z^k - a}^2}
		\leq
		(1 - \gamma\alpha)\E{\norm{x^k - a}^2}
		+
		2\gamma
		\left[
			\cE_F(\mu^\star) - \cE_F(\mu_{z^k})
		\right]
		+
		2\gamma^2 \sigma_F^2,
	\end{equation}
using standard computations regarding the stochastic gradient descent algorithm. Using the smoothness of $F$ and the definition of the Wasserstein distance, this implies
\begin{equation}
	\label{eq:evi-F}
		2\gamma \left[\cE_F(\mu_{y_0^k}) - \cE_F(\mu^\star) \right] \leq (1 - \gamma \alpha) W^2(\mu_{x^k}, \mu^\star) - W^2(\mu_{z^k}, \mu^\star) + \gamma^2 (2\sigma_F^2 + 2Ld).
	\end{equation}
It remains to establish 
$
		2\gamma
		\left[
			\cE_{G}(\mu_{y_0^k}) - \cE_{G}(\mu^\star)
		\right]
		\leq
		W^2(\mu_{y_0^k}, \mu^\star) - W^2(\mu_{x^{k+1}}, \mu^\star) + \gamma^2 C,
$
which is the main technical challenge of the proof. This is done using the frameworks of Yosida approximation of random subdifferentials and Moreau regularizations of random convex functions~\cite{bau-com-livre11}. Equation~\eqref{eq:evi} is obtained by summing the obtained inequalities.
\end{proof}

\subsection{Link with Wasserstein Gradient Flows}
Equation~\eqref{eq:evi} is reminiscent of the fact that the SPLA shadows the gradient flow of $\KL(\cdot \;|\;\mu^{\star})$ in the metric space $(\cP_2(\bR^d),W)$. To see this, first consider the gradient flow associated to $F$. By definition, it is the flow of the differential equation~\cite{bre-livre73}
\begin{equation}
\label{eq:ode}
\textstyle \frac{\dr}{\dr t}\sx(t) = - \nabla F(\sx(t)), \quad t > 0.
\end{equation}
The function $\sx$ can alternatively be defined as a solution of the variational inequalities
\begin{equation}
\label{eq:evi-euc}
\textstyle 2 \left(F(\sx(t)) - F(a)\right) \leq - \frac{\dr}{\dr t}\|\sx(t) - a\|^2, \quad t > 0, \quad \forall a \in \bR^d.
\end{equation}
The iterates $(u^k)_{k \geq 0}$ of the stochastic gradient descent (SGD) algorithm applied to $F$ can be seen as a (noisy) Euler discretization of~\eqref{eq:ode} with a step size $\gamma >0$. This idea has been used successfully in the stochastic approximation litterature~\cite{robbins1951stochastic,kus-yin-(livre)03}. This analogy goes further since a fundamental inequality used to analyze SGD applied to $F$ is (\cite{moulines2011non})
\begin{equation}
\label{eq:evi-dis}
2\gamma \bE \left(F(u^{k+1}) - F(a)\right) \leq \bE\|u^k - a\|^2 - \bE\|u^{k+1} - a\|^2 + \gamma^2 K, \quad k \geq 0,
\end{equation}
where $K \geq 0$ is some constant, which can be seen as a discrete counterpart of~\eqref{eq:evi-euc}. Note that this inequality is similar to~\eqref{eq:sgd} that is used in the proof of Theorem~\ref{th:evi}.

In the optimal transport theory, the point of view of~\eqref{eq:evi-euc} is used to define the gradient flow of a (geodesically) convex function $\cF$ defined on $\cP_2(\bR^d)$ (see~\cite{santambrogio2017euclidean} or~\cite[Page 280]{ambrosio2008gradient}). Indeed, the  gradient flow $(\nu_t)_{t \geq 0}$ of $\cF$ in the space $(\cP_2(\bR^d),W)$ satisfies for every $t > 0, \mu \in \cP_2(\bR^d)$,
\begin{equation}
    \label{eq:evi-wass}
\textstyle 2\left(\cF(\nu_t) - \cF(\mu)\right) \leq -\frac{\dr}{\dr t}W^2(\nu_t,\mu),
\end{equation}
which can be seen as a continuous time counterpart of Equation~\eqref{eq:evi} by setting $\cF = \KL(\cdot \;|\;\mu^{\star})$. Furthermore, Equation~\eqref{eq:evi-h} in the proof of Theorem~\ref{th:evi} is also related to~\eqref{eq:evi-wass}. It is obtained by applying Equation~\eqref{eq:evi-wass} with $\cF = \cH$ and $\nu_0 = \mu_{z^k}$ (see \textit{e.g}~\cite[Lemma 5]{durmus2018analysis}).  

\subsection{Explicit convergence rates for convex and strongly convex $F$}

Corollaries~\ref{cor:wcvx}, ~\ref{cor:scvx} and~\ref{cor:scvx-kl} below are obtained by unrolling the recursion  provided by Theorem~\ref{th:evi}. The results are summarized in Table~\ref{tab:rate}.

\begin{corollary}[Convex $F$]
\label{cor:wcvx}
	Consider a sequence of independent random variables $(j_k)_{k \geq 0}$ such that $(j_k)_{k \geq 0}$ is independent of $(W^k)_k$ and $(\xi^k)_k$, and the distribution of $j_k$ is uniform over $\{0,\ldots,k\}$. Denote $\hat{x}^k = y_0^{j_k}$. If $\gamma \leq \nicefrac{1}{L}$, then,
	\begin{equation}\label{eq:wcvx-rate}
		\KL(\mu_{\hat{x}^k} \;|\;\mu^\star)
		\leq
		\frac{1}{2\gamma(k+1)}W^2(\mu_{x^0}, \mu^\star) + \frac{\gamma}{2}(2\sigma_F^2 + 2Ld + C).
	\end{equation}
	 Hence, given any $\varepsilon > 0$, choosing stepsize 
$
	\gamma = \min\left\{\tfrac{1}{L}, \tfrac{\varepsilon}{2\sigma_F^2 + 2Ld + C}\right\}
$
	and a number of iterations
	\begin{equation}\label{eq:wcvx-it}
		k+1 \geq \max\left\{\tfrac{L}{\varepsilon}, \tfrac{2\sigma_F^2 + 2Ld + C}{\varepsilon^2}\right\}W^2(\mu_{x^0}, \mu^\star),
	\end{equation}
	implies
$
		\KL(\mu_{\hat{x}^k} \;|\;\mu^\star) \leq \varepsilon.
$
\end{corollary}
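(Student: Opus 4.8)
\textbf{Proof plan for Corollary~\ref{cor:wcvx}.}
The plan is to unroll the recursion from Theorem~\ref{th:evi} in the convex case $\alpha = 0$. First I would write inequality~\eqref{eq:evi} with $\alpha = 0$, namely
\begin{equation}
2\gamma \KL(\mu_{y_0^k} \;|\; \mu^\star) \leq W^2(\mu_{x^k}, \mu^\star) - W^2(\mu_{x^{k+1}}, \mu^\star) + \gamma^2(2\sigma_F^2 + 2Ld + C),
\end{equation}
and sum it over $k = 0, 1, \ldots, K$ for an arbitrary index $K$. The right-hand side telescopes, leaving $W^2(\mu_{x^0}, \mu^\star) - W^2(\mu_{x^{K+1}}, \mu^\star) + (K+1)\gamma^2(2\sigma_F^2 + 2Ld + C)$, and dropping the nonnegative term $W^2(\mu_{x^{K+1}}, \mu^\star)$ gives an upper bound $W^2(\mu_{x^0}, \mu^\star) + (K+1)\gamma^2(2\sigma_F^2 + 2Ld + C)$ on $2\gamma \sum_{k=0}^{K} \KL(\mu_{y_0^k} \;|\; \mu^\star)$. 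Dividing by $2\gamma(K+1)$ yields a bound on the average $\frac{1}{K+1}\sum_{k=0}^{K} \KL(\mu_{y_0^k} \;|\; \mu^\star)$.

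The next step is to identify this average with $\KL(\mu_{\hat{x}^K} \;|\; \mu^\star)$. Since $\hat{x}^K = y_0^{j_K}$ with $j_K$ uniform on $\{0, \ldots, K\}$ and independent of the $(W^k)_k, (\xi^k)_k$ (hence independent of all the iterates), conditioning on $j_K = k$ shows that the distribution of $\hat{x}^K$ is the mixture $\mu_{\hat{x}^K} = \frac{1}{K+1}\sum_{k=0}^{K} \mu_{y_0^k}$. By joint convexity of the KL divergence in its pair of arguments (applied with the second argument fixed at $\mu^\star$), $\KL(\mu_{\hat{x}^K} \;|\; \mu^\star) \leq \frac{1}{K+1}\sum_{k=0}^{K} \KL(\mu_{y_0^k} \;|\; \mu^\star)$. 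Combining this with the telescoped bound gives exactly~\eqref{eq:wcvx-rate} (after renaming $K$ as $k$).

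For the complexity claim, I would plug the stated stepsize $\gamma = \min\{1/L, \varepsilon/(2\sigma_F^2 + 2Ld + C)\}$ into~\eqref{eq:wcvx-rate}. The second term $\frac{\gamma}{2}(2\sigma_F^2 + 2Ld + C) \leq \varepsilon/2$ by the choice of $\gamma$. For the first term, I would use $\frac{1}{\gamma} = \max\{L, (2\sigma_F^2 + 2Ld + C)/\varepsilon\}$, so that $\frac{1}{2\gamma(k+1)} W^2(\mu_{x^0}, \mu^\star) \leq \varepsilon/2$ is equivalent to $k+1 \geq \frac{1}{\gamma \varepsilon} W^2(\mu_{x^0}, \mu^\star) = \max\{L/\varepsilon, (2\sigma_F^2 + 2Ld + C)/\varepsilon^2\} W^2(\mu_{x^0}, \mu^\star)$, which is precisely~\eqref{eq:wcvx-it}. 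Adding the two halves gives $\KL(\mu_{\hat{x}^k} \;|\; \mu^\star) \leq \varepsilon$.

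The only non-routine point is the passage from the sum of KL divergences to the KL divergence of the averaged (mixture) distribution: one must check that $\hat{x}^k$ really has the mixture law and invoke convexity of KL. Everything else is telescoping and arithmetic with the stepsize. I expect no genuine obstacle, though one should be slightly careful that $\KL(\mu_{y_0^k} \;|\; \mu^\star)$ is finite (which follows from Theorem~\ref{th:evi} itself, since its left-hand side is bounded by a finite quantity) so the manipulations are valid.
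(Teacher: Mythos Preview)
Your proposal is correct and follows essentially the same route as the paper: set $\alpha=0$ in Theorem~\ref{th:evi}, telescope over $j=0,\ldots,k$, drop the nonnegative final Wasserstein term, then use that $\mu_{\hat{x}^k}$ is the uniform mixture of the $\mu_{y_0^j}$ together with convexity of $\KL(\cdot\;|\;\mu^\star)$ to pass from the average to the mixture; the complexity claim is the same arithmetic with the prescribed $\gamma$. The only cosmetic difference is that the paper phrases the convexity step through the functional $\cF=\cE_U+\cH$ and Lemma~\ref{lem:basics}, whereas you work directly with the KL divergence.
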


\begin{corollary}[Strongly convex $F$]
\label{cor:scvx}
If $\alpha > 0$ and $\gamma \leq \nicefrac{1}{L}$, then,
	\begin{equation}\label{eq:sevi}
		W^2(\mu_{x^k}, \mu^\star)
		\leq 
		(1-\gamma\alpha)^k W^2(\mu_{x^0}, \mu^\star)
		+
		\tfrac{\gamma(2\sigma_F^2 + 2Ld + C)}{\alpha}.
	\end{equation}
	 Hence, given any $\varepsilon > 0$, choosing stepsize 
$
	\gamma = \min\left\{\tfrac{1}{L}, \tfrac{\varepsilon\alpha}{2(2\sigma_F^2 + 2Ld + C)}\right\}
$
	and a number of iterations
	\begin{equation}\label{eq:scvx-it}
		k \geq \max\left\{\tfrac{L}{\alpha}, \tfrac{2(2\sigma_F^2 + 2Ld + C)}{\varepsilon\alpha^2}\right\}\log\left(\tfrac{2W^2(\mu_{x^0}, \mu^\star)}{\varepsilon}\right),
	\end{equation}
	implies
$
		W^2(\mu_{x^k}, \mu^\star) \leq \varepsilon.
$
\end{corollary}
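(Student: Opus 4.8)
The plan is to convert the one-step bound of Theorem~\ref{th:evi} into a scalar geometric recursion on the squared Wasserstein distances and then unroll it. Since $\KL(\mu_{y_0^k}\;|\;\mu^\star)\geq 0$, dropping the left-hand side of \eqref{eq:evi} gives, for every $k\geq 0$,
\[
W^2(\mu_{x^{k+1}},\mu^\star)\ \leq\ (1-\gamma\alpha)\,W^2(\mu_{x^k},\mu^\star)\ +\ \gamma^2\big(2\sigma_F^2+2Ld+C\big).
\]
Writing $a_k\eqdef W^2(\mu_{x^k},\mu^\star)$, $q\eqdef 1-\gamma\alpha$ and $b\eqdef \gamma^2(2\sigma_F^2+2Ld+C)$, this is $a_{k+1}\leq q\,a_k+b$. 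Before unrolling I would record that $0\leq q<1$: the strong convexity constant never exceeds the smoothness constant, so $\alpha\leq L$ and hence $\gamma\alpha\leq\gamma L\leq 1$, while $\gamma\alpha>0$ because $\alpha>0$ (in the borderline case $\gamma\alpha=1$ the same final bound is recovered with the usual conventions $0^0=1$ and an empty geometric sum).

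Unrolling the scalar recursion gives $a_k\leq q^k a_0+b\sum_{j=0}^{k-1}q^j\leq q^k a_0+\frac{b}{1-q}$, and since $\frac{b}{1-q}=\frac{\gamma^2(2\sigma_F^2+2Ld+C)}{\gamma\alpha}=\frac{\gamma(2\sigma_F^2+2Ld+C)}{\alpha}$, this is exactly \eqref{eq:sevi}.

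For the complexity claim I would split the target accuracy $\varepsilon$ in two equal halves. For the asymptotic ("bias") term, choosing $\gamma\leq\frac{\varepsilon\alpha}{2(2\sigma_F^2+2Ld+C)}$ — consistent with $\gamma=\min\{\nicefrac1L,\ \varepsilon\alpha/(2(2\sigma_F^2+2Ld+C))\}$ — forces $\frac{\gamma(2\sigma_F^2+2Ld+C)}{\alpha}\leq\varepsilon/2$. For the transient term, using $1-\gamma\alpha\leq e^{-\gamma\alpha}$ we get $(1-\gamma\alpha)^k W^2(\mu_{x^0},\mu^\star)\leq e^{-k\gamma\alpha}W^2(\mu_{x^0},\mu^\star)$, which is $\leq\varepsilon/2$ once $k\geq\frac{1}{\gamma\alpha}\log\!\big(2W^2(\mu_{x^0},\mu^\star)/\varepsilon\big)$. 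Finally, with the stated choice of $\gamma$ one has $\frac1\gamma=\max\{L,\ \frac{2(2\sigma_F^2+2Ld+C)}{\varepsilon\alpha}\}$, hence $\frac{1}{\gamma\alpha}=\max\{\frac L\alpha,\ \frac{2(2\sigma_F^2+2Ld+C)}{\varepsilon\alpha^2}\}$, which reproduces the iteration count \eqref{eq:scvx-it}; adding the two halves yields $W^2(\mu_{x^k},\mu^\star)\leq\varepsilon$.

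The main obstacle has already been dealt with inside Theorem~\ref{th:evi}; what remains here is essentially a routine "unroll the recursion" argument. The only points deserving a little care are the verification that $\gamma\alpha<1$ (so that the recursion is a genuine contraction and the geometric series is summable) and the bookkeeping that identifies $1/(\gamma\alpha)$ with the maximum appearing in \eqref{eq:scvx-it} once $\gamma$ is taken as the minimum of the two candidate stepsizes.
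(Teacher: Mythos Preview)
Your proposal is correct and follows essentially the same approach as the paper: drop the nonnegative KL term on the left of \eqref{eq:evi} to obtain the geometric recursion $W^2(\mu_{x^{k+1}},\mu^\star)\leq(1-\gamma\alpha)W^2(\mu_{x^k},\mu^\star)+\gamma^2(2\sigma_F^2+2Ld+C)$, unroll it, and then split $\varepsilon$ into two halves to handle the bias and transient terms separately. The paper's proof is slightly terser (it does not spell out $1-\gamma\alpha\leq e^{-\gamma\alpha}$ or the $\gamma\alpha<1$ check), but the argument is the same.
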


\begin{corollary}[Strongly convex $F$]
\label{cor:scvx-kl}
Consider a sequence of independent random variables $(j_k)_{k \geq 0}$ such that $(j_k)_k$ is independent of $(W^k)_k$ and $(\xi^k)_k$. Assume that the distribution of $j_k$ is geometric over $\{0,\ldots,k\}$: 
$$\bP(j_k = r) \propto (1-\gamma\alpha)^{-r}.$$
Denote $\tilde{x}^k = x^{j_k}$. If $\alpha > 0$ and $\gamma \leq \nicefrac{1}{L}$, then,
	\begin{equation}	
	\KL(\mu_{\tilde{x}^k} \;|\;\mu^\star)
	\leq
	\tfrac{\alpha W^2(\mu_{x^0}, \mu^\star)}{2}\cdot\tfrac{(1-\gamma\alpha)^{k+1}}{1 - (1-\gamma\alpha)^{k+1}} + \tfrac{\gamma(2\sigma_F^2+2Ld+C)}{2}.
	\end{equation}
	Hence, given any $\varepsilon > 0$, choosing stepsize
	$\gamma = \min \left\{ \tfrac{1}{L}, \tfrac{\varepsilon\alpha}{2\sigma_F^2 + 2Ld + C} \right\}$ and a number of iterations
	\begin{equation*}
	k \geq \max \left\{\tfrac{L}{\alpha}, \tfrac{2\sigma_F^2 + 2Ld + C}{\varepsilon \alpha^2}  \right\} \log \left(2\max\left\{1, \tfrac{W^2(\mu_{x^0}, \mu^\star) }{\varepsilon} \right\}\right),
	\end{equation*}
	implies
	$
	\KL(\mu_{\tilde{x}^k} \;|\;\mu^\star) \leq \alpha\varepsilon.
    $
\end{corollary}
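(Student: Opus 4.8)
The plan is to unroll the recursion of Theorem~\ref{th:evi} in the strongly convex regime and then perform a weighted average of the resulting inequalities, with weights chosen so that the telescoping-with-contraction structure collapses cleanly. First I would rewrite \eqref{eq:evi} as
\begin{equation*}
2\gamma \KL(\mu_{y_0^k}\;|\;\mu^\star) \leq (1-\gamma\alpha) W^2(\mu_{x^k},\mu^\star) - W^2(\mu_{x^{k+1}},\mu^\star) + \gamma^2(2\sigma_F^2 + 2Ld + C),
\end{equation*}
and then divide both sides by $(1-\gamma\alpha)^{k+1}$. Since $\gamma\leq 1/L$ and $\alpha\leq L$ (from Assumption~\ref{as:smooth}, as $F$ is both $L$-smooth and $\alpha$-strongly convex), we have $1-\gamma\alpha\in[0,1)$, so the division is legitimate and turns the right-hand side into a genuine telescoping difference: writing $a_k \eqdef W^2(\mu_{x^k},\mu^\star)/(1-\gamma\alpha)^k$, the inequality becomes $2\gamma(1-\gamma\alpha)^{-(k+1)}\KL(\mu_{y_0^k}\;|\;\mu^\star) \leq a_k - a_{k+1} + \gamma^2(2\sigma_F^2+2Ld+C)(1-\gamma\alpha)^{-(k+1)}$. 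A technical point to handle here is the relationship between $\KL(\mu_{y_0^k}\;|\;\mu^\star)$ and $\KL(\mu_{x^{k+1}}\;|\;\mu^\star)$: since $x^{k+1} = y_n^k$ is obtained from $y_0^k$ by applying proximity operators, and since the KL divergence to $\mu^\star$ (equivalently $\cF - \cF(\mu^\star)$, using the identity from the proof of Theorem~\ref{th:evi}) should not increase under these prox steps in the relevant sense — I would either invoke that $\KL(\mu_{x^{k+1}}\;|\;\mu^\star) \leq \KL(\mu_{y_0^k}\;|\;\mu^\star)$ or, more carefully, note that the randomized iterate $\tilde x^k$ is defined via $x^{j_k}$ so what I actually need is a lower bound on the weighted sum of $\KL(\mu_{x^{r+1}}\;|\;\mu^\star)$ terms in terms of $\KL(\mu_{\tilde x^k}\;|\;\mu^\star)$ by convexity of KL in its first argument.

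Next I would sum the rescaled inequality over $r = 0,1,\dots,k$. The left-hand side becomes $2\gamma\sum_{r=0}^{k}(1-\gamma\alpha)^{-(r+1)}\KL(\mu_{y_0^r}\;|\;\mu^\star)$; normalizing by $S_k \eqdef \sum_{r=0}^{k}(1-\gamma\alpha)^{-(r+1)}$ and using Jensen's inequality together with the fact that $\mu_{\tilde x^k} = \sum_{r=0}^{k}\bP(j_k=r)\,\mu_{x^{r+1}}$ with $\bP(j_k=r)\propto(1-\gamma\alpha)^{-(r+1)}$ (after reconciling the $y_0^r$ versus $x^{r+1}$ indexing, as above), this controls $\KL(\mu_{\tilde x^k}\;|\;\mu^\star)$ from above by $\frac{1}{2\gamma S_k}\big(a_0 - a_{k+1} + \gamma^2(2\sigma_F^2+2Ld+C)S_k\big)$. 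The telescoped term gives $a_0 - a_{k+1} \leq a_0 = W^2(\mu_{x^0},\mu^\star)$ (dropping the nonnegative $a_{k+1}$), and the geometric sum evaluates to $S_k = (1-\gamma\alpha)^{-1}\cdot\frac{(1-\gamma\alpha)^{-(k+1)}-1}{(1-\gamma\alpha)^{-1}-1} = \frac{(1-\gamma\alpha)^{-(k+1)}-1}{\gamma\alpha}$, so that $\frac{1}{S_k} = \frac{\gamma\alpha}{(1-\gamma\alpha)^{-(k+1)}-1} = \frac{\gamma\alpha(1-\gamma\alpha)^{k+1}}{1-(1-\gamma\alpha)^{k+1}}$. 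Substituting, the $\gamma\alpha$ cancels against the $2\gamma$ in the denominator to leave exactly $\frac{\alpha}{2}W^2(\mu_{x^0},\mu^\star)\cdot\frac{(1-\gamma\alpha)^{k+1}}{1-(1-\gamma\alpha)^{k+1}} + \frac{\gamma}{2}(2\sigma_F^2+2Ld+C)$, which is the claimed bound.

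For the complexity statement, I would plug $\gamma = \min\{1/L,\ \varepsilon\alpha/(2\sigma_F^2+2Ld+C)\}$ to make the second (oscillation) term at most $\alpha\varepsilon/2$, and then bound the first term by $\alpha\varepsilon/2$ as well: using $\frac{(1-\gamma\alpha)^{k+1}}{1-(1-\gamma\alpha)^{k+1}} \leq \frac{(1-\gamma\alpha)^{k+1}}{1/2} = 2(1-\gamma\alpha)^{k+1}$ once $(1-\gamma\alpha)^{k+1}\leq 1/2$ (which holds for $k+1\gtrsim \frac{1}{\gamma\alpha}$), and then requiring $2(1-\gamma\alpha)^{k+1}\cdot\frac{\alpha}{2}W^2(\mu_{x^0},\mu^\star)\leq\frac{\alpha\varepsilon}{2}$, i.e.\ $(1-\gamma\alpha)^{k+1}\leq \varepsilon/(2W^2(\mu_{x^0},\mu^\star))$, which using $-\log(1-\gamma\alpha)\geq\gamma\alpha$ translates to $k+1\geq\frac{1}{\gamma\alpha}\log\big(2\max\{1,W^2(\mu_{x^0},\mu^\star)/\varepsilon\}\big)$; substituting the two possible values of $1/(\gamma\alpha)$ gives the stated iteration count. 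The main obstacle I anticipate is the bookkeeping around which iterate the KL is evaluated at — Theorem~\ref{th:evi} produces $\KL(\mu_{y_0^k}\;|\;\mu^\star)$ on the left, but the randomized output $\tilde x^k$ is built from the $x^{j_k}$, so I must justify passing from one to the other (monotonicity of $\cF-\cF(\mu^\star)$ under the prox sweep, or equivalently that the prox steps only decrease the objective), and then the averaging via Jensen is routine.
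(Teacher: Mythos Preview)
Your approach is the same as the paper's: divide \eqref{eq:evi} by the appropriate power of $(1-\gamma\alpha)$, sum so that the Wasserstein terms telescope, apply convexity of $\KL$ to pass to the mixture, evaluate the geometric sum, and then tune $\gamma$ and $k$. Your computation of $S_k$ and of the final bound is correct and matches the paper's line by line (your weights $(1-\gamma\alpha)^{-(r+1)}$ differ from the paper's $(1-\gamma\alpha)^{-j}$ only by a global constant, which washes out upon normalization).

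The obstacle you flag is genuine, but your proposed resolution does not work. First, by definition $\tilde{x}^k = x^{j_k}$ with $j_k\in\{0,\dots,k\}$, so $\mu_{\tilde{x}^k}$ is a mixture of $\mu_{x^0},\dots,\mu_{x^k}$, not of the $\mu_{x^{r+1}}$ as you wrote. Second, even granting that shift, the claim that the stochastic prox sweep $y_0^r\mapsto y_n^r = x^{r+1}$ is $\cF$-decreasing fails in general: the proxes are taken with respect to the random $g_i(\cdot,\xi^k)$, not $G_i$, and nothing controls the entropy $\cH$ or the other potential terms $\cE_F,\cE_{G_j}$ along the sweep. The paper's own proof does not address this step either: it passes from $\sum_j (1-\gamma\alpha)^{-j}\cF(\mu_{y_0^j})$ (which is what the summed recursion produces) to $\sum_j (1-\gamma\alpha)^{-j}\cF(\mu_{x^j})$ (which is what Jensen applied to $\mu_{\tilde{x}^k}$ yields) without comment. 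Comparing with Corollary~\ref{cor:wcvx}, where the randomized iterate is defined as $\hat{x}^k = y_0^{j_k}$, the natural reading is that $\tilde{x}^k$ should likewise be $y_0^{j_k}$; with that amendment both your argument and the paper's go through cleanly, and no monotonicity-under-prox claim is needed.
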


We can compare these bounds with the one of~\cite{durmus2018analysis}. First, in the particular case $n=1$ and $g_1(\cdot,s) \equiv G_1$, SPLA boils down to the algorithm of~\cite[Section 4.2]{durmus2018analysis}, Corollary~\ref{cor:wcvx} matches exactly~\cite[Corollary 18]{durmus2018analysis} and Corollary~\ref{cor:scvx} matches~\cite[Corollary 22]{durmus2018analysis}. To our knowledge, Corollary~\ref{cor:scvx-kl} has no counterpart in the litterature.
We now focus on the case $F \equiv 0$ and $n = 1$ of SPLA, as it concentrates the innovations of our paper. In this case, $L = 0$ and $\sigma_F = 0$. Compared to the Stochastic Subgradient Langevin Algorithm (SSLA)~\cite[Section 4.1]{durmus2018analysis}, Corollary~\ref{cor:wcvx} matches with~\cite[Corollary 14]{durmus2018analysis}.



\section{Numerical experiments}

\subsection{Simulations using a ground truth}
We first concentrate on the case $F \equiv 0$ and $n = 1$. Let $U = |x| = \bE_\xi(|x| + x \xi)$ ($g_1(x,s) = |x| + xs$), where $\xi$ is standard Gaussian. The target $\mu^\star \propto \exp(-U)$ is a standard Laplace distribution in $\bR$. In this case, $L = \alpha = \sigma_F = 0$ and $C = L_{G_1}^2 = 2$. We shall illustrate the bound on $\KL(\mu_{\hat{x}^k}|\mu^\star)$ (Corollary~\ref{cor:wcvx} for SPLA and~\cite[Corollary 14]{durmus2018analysis} for SSLA) for both algorithms using histograms. Note that the distribution $\mu_{\hat{x}^k}$ of $\hat{x}^k$ is a (deterministic) mixture of the $\mu_{x^j}$: $\mu_{\hat{x}^k} = \frac{1}{k} \sum_{j=1}^k \mu_{x^j}$. Using Pinsker inequality, we can bound the total variation distance between $\mu_{\hat{x}^k}$ and $\mu^\star$ from the bound on $\KL$, and illustrate this by histograms. 
In Figure~\ref{fig}, we take $\gamma = 10$ and do $10^5$ iterations of both algorithms. Note that here the complexity of one iteration of SPLA or SSLA is the same.
\begin{figure}[ht!]
\[
  \begin{array}{cc}
  \includegraphics[width=0.5\linewidth]{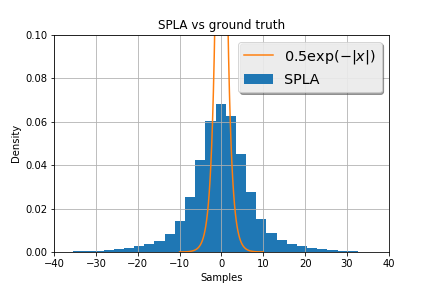} &
  \includegraphics[width=0.5\linewidth]{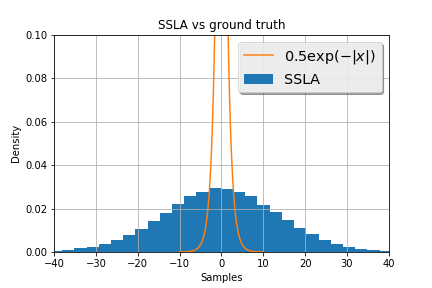}
  \end{array}
\]
\caption{Comparison between histograms of SPLA and SSLA and the true density $0.5 \exp(-|x|)$.}
\label{fig}
\end{figure}
One can see that SPLA enjoys the well known advantages of stochastic proximal methods~\cite{toulis2015stable}: precision, numerical stability (less outliers), and robustness to step size.

\subsection{Application to Trend Filtering on Graphs}

\label{sec:num}
In this section we consider the following Bayesian point of view of {\em trend filtering on graphs}~\cite{tibshirani2014adaptive}. Consider a finite undirected graph $G = (V,E)$, where $V$ is the set of vertices and $E$ is the set of edges. Denote $d$ the cardinality of $V$ and $|E|$ the cardinality of $E$. A realization of a random vector $Y \in \bR^V$ is observed. In a Bayesian framework, the distribution of $Y$ is parametrized by a vector $X \in \bR^V$ which is itself random and whose distribution $p$ is proportional to $\exp(-\lambda \tv(x,G))$, where $\lambda >0$ is a scaling parameter and where for every $x \in \bR^V$ 
$$
\textstyle \tv(x,G) = \sum \limits_{i,j \in V, \{i,j\} \in E} |x(i) - x(j)|,
$$
is the Total Variation regularization over $G$.
The goal is to learn $X$ after an observation of $Y$. The paper~\cite{wang2016trend} consider the case where the distribution of $Y$ given $X$ (a.k.a the likelihood) is proportional to $\exp(-\frac{1}{2\sigma^2}\|X-y\|^2)$, where $\sigma \geq 0$ is another scaling parameter. In other words, the distribution of $Y$ given $X$ is $N(X,\sigma^2 I)$, a normal distribution centered at $X$ with variance $\sigma^2 I$ (where $I$ is the $d \times d$ identity matrix). Denoting 
\begin{equation}
    \label{eq:posterior}
\pi(x \;|\;y) \propto \exp(-U(x)), \quad U(x) = \tfrac{1}{2\sigma^2}\|x-y\|^2 + \lambda\tv(x,G),
\end{equation}
the posterior distribution of $X$ given $Y$, the maximum a posteriori estimator in this Bayesian framework is called the Graph Trend Filtering estimate~\cite{wang2016trend}. It can be written 
$$x^\star = \argmax_{x \in \bR^V} \pi(x \;|\;Y) = \argmin_{x \in \bR^V} \tfrac{1}{2\sigma^2}\|x-Y\|^2 + \lambda \tv(x,G).$$ 
Although maximum a posteriori estimators carry some information, they are not able to capture uncertainty in the learned parameters. Samples a posteriori provide a better understanding of the posterior distribution and allow to compute other Bayesian estimates such as confidence intervals. This allows to avoid overfitting among other things. In our context, sampling a posteriori would require to sample from the target distribution
$\mu^{\star}(x) = \pi(x \;|\;Y)$.

In the case where $G$ is a 2D grid (which can be identified with an image), the proximity operator of $\tv(\cdot,G)$ can be computed using a subroutine~\cite{chambolle2010introduction} and the proximal stochastic gradient Langevin algorithm can be used to sample from $\pi(\cdot \;|\;Y)$~\cite{durmus2018efficient, durmus2018analysis}. However, on a large/general graph, the proximity operator of $\tv(\cdot,G)$ is hard to evaluate~\cite{tansey2015fast,salim2019snake}. Since $\tv(\cdot,G)$ is written as a sum, we shall rather select a batch of random edges and compute the proximity operators over these randomly chosen edges. More precisely, we write the potential $U$ defining $\pi(x \;|\;Y)$ in the form~\eqref{eq:general} by setting
\begin{equation}
    \label{eq:Utv}
 \textstyle   U(x) = F(x) + \sum \limits_{i=1}^n G_i(x), \quad F(x) = \frac{1}{2\sigma^2}\|x-Y\|^2, \quad G_i(x) = \lambda \frac{|E|}{n} \bE_{e_i}\left(|x(v_i) - x(w_i)|\right),
\end{equation}
where for every $i \in \{ 1,\ldots,n \}$, $e_i = \{v_i,w_i\} \in E$ is an uniform random edge and the $e_i$ are independent. For every edge $e = \{v,w\} \in E$, (where $v,w$ are vertices) denote $g_e(x) = \lambda \frac{|E|}{n} |x(v) - x(w)|$ and note that $G_i(x) = \bE_{e_i}(g_{e_i}(x))$. The parameter $n$ can be seen as a batch parameter: $\sum_{i=1}^n g_{e_i}(x)$ is an unbiaised approximation of $TV(x,G)$. Also note that we set $f(\cdot,s) \equiv F$. The SPLA applied to sample from $\pi(\cdot \;|\;Y)$ is presented as Algorithm~\ref{alg:appli}.
\begin{algorithm}[!t]
	\caption{SPLA for the Graph Trend Filtering}
	\label{alg:appli}
	\begin{algorithmic}
		\State {\bf Initialize}: $x^0 \in \R^V$
		\For{$k = 0,1,2,\ldots$}
			\State $z^{k} = x^k - \frac{\gamma}{\sigma^2}(x^k - Y)$
			\State Sample random $W^k$ \hfill $\triangleright$ standard Gaussian vector in $\R^V$
			\State $y_0^k = z^k + \sqrt{2\gamma} W^k$
			\For{$i=1,\ldots, n$}
				\State Sample uniform random edges $e_i$
				\State $y_{i}^k = \prox_{\gamma g_{e_i}}(y_{i-1}^{k})$ 
			\EndFor
			\State $x^{k+1} = y_n^k$
		\EndFor
	\end{algorithmic}
\end{algorithm}
In our simulations, the SPLA is compared to two different versions of the Langevin algorithm. In the Stochastic Subgradient Langevin Algorithm (SSLA)~\cite{durmus2018analysis}, stochastic subgradients of $g_{e_i}$ are used instead of stochastic proximity operators. In the Proximal Langevin Algorithm (ProxLA)~\cite{durmus2018analysis}, the full proximity operator of $\sum\limits_{i=1}^n G_i$ is computed using a subroutine. As mentioned in~\cite{salim2019snake,wang2016trend}, we use the gradient algorithm for the dual problem.
The plots in Figure~\ref{fig:fb_combined} provide simulations of the algorithms on our machine (using  one  thread  of  a  2,800  MHz  CPU  and  256GB RAM). Additional numerical experiments are available in the Appendix. Four real life graphs from the dataset~\cite{snapnets} are considered : the Facebook graph (4,039 nodes and 88,234 edges, extracted from the Facebook social network), the Youtube graph (1,134,890 nodes and 2,987,624 edges, extracted from the social network included in the Youtube website), the Amazon graph (the 334,863 nodes represent products linked by and 925,872 edges) and the DBLP graph (a co-authorship network of 317,080 nodes and 1,049,866 edges). On the larger graphs, we only simulate SPLA and SSLA since the computation of a full proximity operator becomes prohibitive. Numerical experiments over the Amazon and the DBLP graphs are available in the Supplementary material.


\begin{figure}[ht!]
\[
  \begin{array}{cc}
  \includegraphics[width=.4\linewidth]{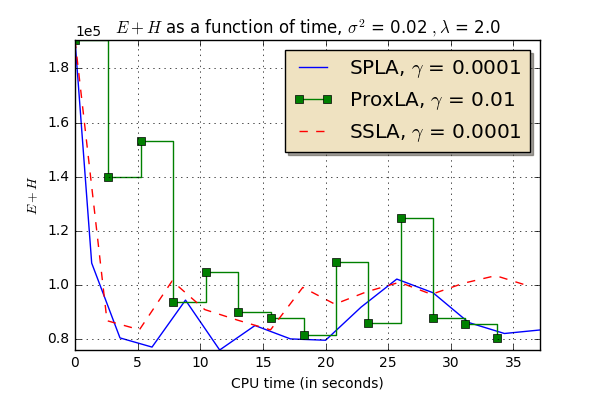} &
  \includegraphics[width=.4\linewidth]{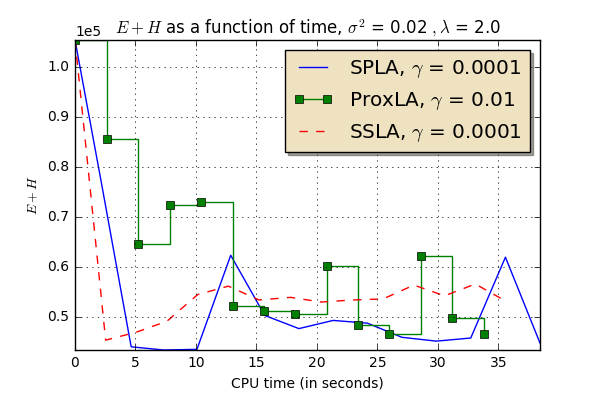}\\
    \includegraphics[width=.4\linewidth]{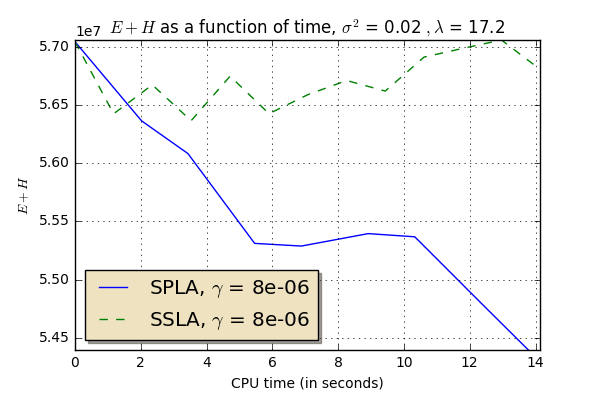} &
  \includegraphics[width=.4\linewidth]{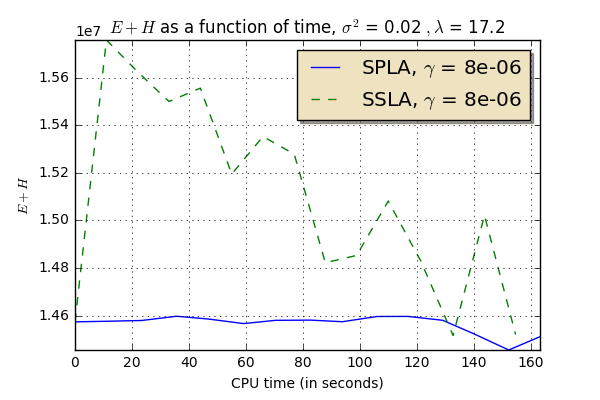}
  \end{array}
\]

    \caption{{\bf Top row:} The functional $\cF = \cH + \cE_U$ as a function of CPU time for the three algorithms over the Facebook graph. Left: $Y \sim N(0,I)$. Right: $Y \sim N(0,I)$ and then half of the coordinates of $Y$ are put to zero. {\bf Bottom row:} The functional $\cF = \cH + \cE_U$ as a function of CPU time for the two algorithms over the Youtube graph. Left: $Y \sim N(0,I)$. Right: $Y \sim N(0,I)$ and then half of the coordinates of $Y$ are put to zero.}
    \label{fig:fb_combined}
\end{figure}

In our simulations, we represent the functional $\cF = \cH + \cE_U$ as a function of CPU time while running the algorithms. The parameters $\lambda$ and $\sigma$ are chosen such that the log likelihood term and the Total Variation regularization term have the same weight. The functionals $\cH$ and $\cE_U$ are estimated using five random realizations of each iterate $\hat{x}^k$ ($\cH$ is estimated using a kernel density estimator). The batch parameter $n$ is equal to $400$. We consider cases where $Y$ has a standard gaussian distribution and cases where half of the components of $Y$ are standard gaussians and half are equal to zero (this correspond to the graph inpainting task~\cite{chen2014signal}). SPLA and SSLA are always simulated with the same step size. 

As expected, the numerical experiments show the advantage of using stochastic proximity operators instead of stochastic subgradients. It is a standard fact that proximity operators are better than subgradients to tackle $\ell^1$-norm terms~\cite{bau-com-livre11}. Our figures show that stochastic proximity operators are numerically more stable than alternatives~\cite{toulis2015stable}. Our figures also show the advantage of stochastic methods (SSLA or SPLA) over deterministic ones for large scale problems. The SSLA and the SPLA provide iterates about one hundred times more frequently than ProxLA, and are faster in the first iterations.

\bibliographystyle{plain}

\newcommand{\noop}[1]{} \def\cprime{$'$} \def\cdprime{$''$} \def\cprime{$'$}

\newpage
\appendix

\part*{Appendix}

\tableofcontents

\clearpage

\section{Lemmas Needed for the Proof of the Main Theorem}

In order to prove Theorem~\ref{th:evi}, we will need to establish several lemmas. For every convex function $g : \bR^d \to \bR$, we denote $\cE_g(\mu) = \int g \dr \mu.$ In the sequel, we assume that Assumptions~\ref{as:stoch_structure}--\ref{as:lip} hold true.

We first recall~\cite[Lemma 1]{durmus2018analysis}.

\begin{lemma}
\label{lem:basics}
The target distribution satisfies $\mu^\star \in \cP_2(\bR^d), \cE_U(\mu^\star)$ and $\cH(\mu^\star) < \infty$. 

Moreover, if $\mu \in \cP_2(\bR^d)$, then $$\KL(\mu \;|\; \mu^\star) = \cE_U(\mu) + \cH(\mu) - \left(\cE_U(\mu^\star) + \cH(\mu^\star)\right) = \cF(\mu) - \cF(\mu^\star),$$ provided that $\cE_U(\mu) < \infty$.
\end{lemma}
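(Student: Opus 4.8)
The plan is to derive both parts from two classical facts about log-concave measures together with careful accounting for the definition of relative entropy. Throughout, write $\mu^\star(x) = Z^{-1}\exp(-U(x))$ with $Z \eqdef \int \exp(-U) \in (0,\infty)$, and note that this density is strictly positive, so $\mu^\star$ and the Lebesgue measure $\cL$ are mutually absolutely continuous. The first ingredient I would establish is that $U$, being convex and satisfying $\int\exp(-U) < \infty$, is bounded below and has at least linear growth: there are $a > 0$ and $b \in \bR$ with $U(x) \ge a\norm{x} + b$ for all $x$, and hence $U \ge b$. This is standard: for every $\lambda$ one has $\cL(\{U \le \lambda\}) \le e^{\lambda}\int\exp(-U) < \infty$, so all sublevel sets of the convex function $U$ are bounded, and a routine convexity argument (bounding $U(x)$ from below by comparing it with the value of $U$ on the boundary of a fixed bounded sublevel set, along the ray from the minimizer through $x$) upgrades this to the affine minorant.

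With the minorant in hand the properties of $\mu^\star$ follow quickly. From $\mu^\star(x) \le Z^{-1}e^{-b}e^{-a\norm{x}}$ we get $\int\norm{x}^2\,d\mu^\star < \infty$, so $\mu^\star \in \cP_2(\bR^d)$. For the potential energy, since $U \ge b$ only its positive part is at issue, and $U^+ e^{-U} \le \tfrac{2}{e}\,e^{-U/2}$ by boundedness of $t \mapsto t e^{-t/2}$ on $[0,\infty)$; as $U/2$ again grows at least linearly, $\int e^{-U/2} < \infty$, so $\cE_U(\mu^\star) = Z^{-1}\int U e^{-U}$ is finite (and $\ge b > -\infty$). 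Finally $\log\mu^\star = -U - \log Z$, hence $|\log\mu^\star| \le |U| + |\log Z| \in L^1(\mu^\star)$, which gives that $\cH(\mu^\star) = \int \log\mu^\star\,d\mu^\star = -\cE_U(\mu^\star) - \log Z$ is finite; in particular $\cF(\mu^\star) = \cE_U(\mu^\star) + \cH(\mu^\star) = -\log Z$.

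For the second part, take $\mu \in \cP_2(\bR^d)$ with $\cE_U(\mu) < \infty$ (which, since $U \ge b$, is in fact finite). Since $\mu^\star$ is equivalent to $\cL$, the condition $\mu \ll \cL$ is the same as $\mu \ll \mu^\star$; if it fails, then $\cH(\mu) = \KL(\mu \;|\; \cL) = +\infty$ and $\KL(\mu \;|\; \mu^\star) = +\infty$, and the asserted identity holds with both sides equal to $+\infty$. Otherwise, writing $\mu(x)$ for the density of $\mu$, we have $\tfrac{d\mu}{d\mu^\star}(x) = Z\,\mu(x)\,e^{U(x)}$, so
\[
\KL(\mu \;|\; \mu^\star) = \int \log\mu(x)\,d\mu(x) + \int U(x)\,d\mu(x) + \log Z = \cH(\mu) + \cE_U(\mu) + \log Z ,
\]
and substituting $\log Z = -\cF(\mu^\star)$ from the previous step gives $\KL(\mu \;|\; \mu^\star) = \cF(\mu) - \cF(\mu^\star)$.

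The main obstacle is not a hard computation but the measure-theoretic hygiene behind the last display: one must check that $\cH(\mu) = \int\log\mu\,d\mu$ is a well-defined element of $(-\infty, +\infty]$ (relative entropy against the non-finite measure $\cL$ is not automatically meaningful) and that the additive splitting never produces an indeterminate $\infty - \infty$. The clean way to handle this is to compare with a fixed reference Gaussian $\gamma$: write $\cH(\mu) = \KL(\mu \;|\; \gamma) + \int\log\gamma\,d\mu$, where $\KL(\mu\;|\;\gamma) \in [0,+\infty]$ is unambiguous and $\int\log\gamma\,d\mu = -\tfrac{d}{2}\log(2\pi) - \tfrac12\int\norm{x}^2\,d\mu$ is a finite real number because $\mu \in \cP_2(\bR^d)$. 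This simultaneously makes $\cH(\mu)$ well defined, shows $\cH(\mu) > -\infty$, and justifies splitting the integral of $\log\tfrac{d\mu}{d\mu^\star} = \log Z + \log\mu + U$ into the three pieces above (the $U$-piece and the constant being finite). Together with the linear-growth bound—which is exactly what makes $\mu^\star \in \cP_2(\bR^d)$ and $\cE_U(\mu^\star),\cH(\mu^\star)$ finite—this yields the lemma.
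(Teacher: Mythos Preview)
Your argument is correct and complete. The paper does not actually prove this lemma: it simply records it as \cite[Lemma~1]{durmus2018analysis} and moves on. What you have written is essentially the standard proof one finds behind that citation --- linear growth of the convex potential from integrability of $e^{-U}$, then finiteness of the second moment, potential energy, and entropy of $\mu^\star$, and finally the additive decomposition of $\KL(\mu\,|\,\mu^\star)$ with the Gaussian-reference trick to make $\cH(\mu)$ well defined on $\cP_2(\bR^d)$. There is nothing to compare methodologically; your write-up simply fills in what the paper delegates to the literature.
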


\begin{lemma}
\label{lem:entropy}
	\begin{equation}
	\label{eq:15}
		2\gamma \left[ \cH(\mu_{y_0^k}) - \cH(\mu^\star) \right]
		\leq
		W^2(\mu_{z^k}, \mu^\star) - W^2(\mu_{y_0^k}, \mu^\star).
	\end{equation}
\end{lemma}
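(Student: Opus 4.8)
\textbf{Proof proposal for Lemma~\ref{lem:entropy}.}

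The plan is to recognize that the step $y_0^k = z^k + \sqrt{2\gamma}\,W^k$ is exactly one step of the Langevin/heat-flow discretization applied to the entropy functional $\cH$, and that the stated inequality is the discrete ``evolution variational inequality'' (EVI) for the Wasserstein gradient flow of $\cH$. Concretely, adding an independent Gaussian $\sqrt{2\gamma}\,W^k$ to a random variable with law $\mu_{z^k}$ produces a random variable whose law is the evolute of $\mu_{z^k}$ under the heat semigroup run for time $\gamma$; the heat flow is the gradient flow of $\cH$ in $(\cP_2(\bR^d),W)$. Since $\cH = \KL(\cdot\;|\;\cL)$ is geodesically convex (indeed $0$-convex along Wasserstein geodesics), the general EVI characterization of gradient flows of convex functionals — recalled in the main text as \eqref{eq:evi-wass} — applies. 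This is precisely the content of \cite[Lemma 5]{durmus2018analysis}, which I would invoke essentially verbatim.

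In more detail, the steps are as follows. First, I would note that $\mu_{z^k} \in \cP_2(\bR^d)$: this follows because $x^k \in \cP_2(\bR^d)$ (inductively, with $x^0$ deterministic) and $\nabla f(x^k,\xi^k)$ has bounded second moment by Assumptions~\ref{as:var} and~\ref{as:smooth} (via $\|\nabla f(x,\xi)\| \le \|\nabla f(x,\xi) - \nabla f(a,\xi)\| + \|\nabla f(a,\xi)\|$ and Lipschitzness/bounded variance), so $z^k = x^k - \gamma\nabla f(x^k,\xi^k) \in \cP_2(\bR^d)$. Second, I would set $\nu_0 = \mu_{z^k}$ and let $(\nu_t)_{t\ge 0}$ be the gradient flow of $\cH$ started at $\nu_0$, i.e.\ the law of the solution of the Fokker–Planck equation $\partial_t \nu_t = \Delta \nu_t$; equivalently $\nu_t$ is the law of $z^k + \sqrt{2t}\,W$ for independent standard Gaussian $W$. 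In particular $\nu_\gamma = \mu_{y_0^k}$. Third, I would apply \eqref{eq:evi-wass} with $\cF = \cH$ and $\mu = \mu^\star$, giving $2[\cH(\nu_t) - \cH(\mu^\star)] \le -\frac{\dr}{\dr t}W^2(\nu_t,\mu^\star)$ for all $t>0$, and integrate over $t \in [0,\gamma]$. Using that $t \mapsto \cH(\nu_t)$ is nonincreasing along the gradient flow (so $\cH(\nu_\gamma) \le \cH(\nu_t)$ for $t \le \gamma$), the left-hand side is bounded below by $2\gamma[\cH(\mu_{y_0^k}) - \cH(\mu^\star)]$, while integrating the right-hand side gives $W^2(\nu_0,\mu^\star) - W^2(\nu_\gamma,\mu^\star) = W^2(\mu_{z^k},\mu^\star) - W^2(\mu_{y_0^k},\mu^\star)$. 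This yields \eqref{eq:15}.

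The main obstacle is not conceptual but technical: one must justify the regularity needed to integrate the EVI — absolute continuity of $t\mapsto W^2(\nu_t,\mu^\star)$, finiteness of $\cH(\nu_t)$ along the flow (which holds since $\nu_0$ has finite second moment, hence $\cH(\nu_t) < \infty$ for $t>0$ by the regularizing effect of the heat flow, and $\cH(\mu^\star)<\infty$ by Lemma~\ref{lem:basics}), and the fact that the curve $t\mapsto \nu_t$ defined by convolution with Gaussians genuinely coincides with the metric gradient flow of $\cH$. All of these are standard facts from the theory of Wasserstein gradient flows (see \cite{ambrosio2008gradient,santambrogio2017euclidean}) and are assembled in the precise form needed in \cite[Lemma 5]{durmus2018analysis}; I would cite that result and keep the argument short, spelling out only the second-moment bookkeeping for $z^k$ that is specific to our algorithm.
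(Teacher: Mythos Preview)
Your proposal is correct and follows essentially the same approach as the paper: both cite \cite[Lemma~5]{durmus2018analysis} directly, then flesh it out by identifying $t\mapsto\nu_t$ (the law of $z^k+\sqrt{2t}\,W$) as the Wasserstein gradient flow of $\cH$, invoking the EVI \eqref{eq:evi-wass} together with the dissipation $\tfrac{\dr}{\dr t}\cH(\nu_t)\le 0$, and evaluating at $t=\gamma$. The only cosmetic difference is that the paper packages the two ingredients into the Lyapunov function $\ell(t)=t(\cH(\nu_t)-\cH(\mu^\star))+\tfrac12 W^2(\nu_t,\mu^\star)$ and uses $\ell(\gamma)\le\ell(0)$, whereas you integrate the EVI over $[0,\gamma]$ and bound the integrand via monotonicity of $\cH(\nu_t)$ --- these are equivalent rearrangements of the same computation.
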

\begin{proof}
    This is an application of~\cite[Lemma 5]{durmus2018analysis} with $\mu \leftarrow \mu_{z^k}$ and $\pi \leftarrow \mu^\star$. 
    
    The proof is the roughly the same as proving the $\mO(1/k)$ convergence rate of the gradient descent algorithm, but in continuous time and in the space $\cP_2(\bR^d)$. For the sake of completeness, we provide the main arguments of a different proof than~\cite{durmus2018analysis} here, that makes easier the connection with Lyapunov techniques used in the analysis of gradient descent/gradient flows in Euclidean spaces.
    
    Consider Brownian motion $(B_t)$ and the rescaled Brownian Motion $(\sqrt{2}B_t)$ initialized with $\sqrt{2}B_0 \sim \mu_{z^k}$ and denote, for every $t \geq 0$, $\nu_t$ the distribution of $\sqrt{2}B_t$. Then $(\nu_t)_t$ is a gradient flow of $\cH$ (see~\cite{santambrogio2017euclidean}). This implies (see~\cite[Page 280]{ambrosio2008gradient}) 
    \begin{equation}
        \label{eq:gfh}
        \forall t > 0, \quad 2\left(\cH(\nu_t) - \cH(\mu^{\star})\right) \leq -\frac{\dr}{\dr t}W^2(\nu_t,\mu^{\star}).
    \end{equation}
    This also implies~\cite[Page 711]{villani2008optimal} that (the objective function) $\cH$ is a Lyapunov function for (the gradient flow) $(\nu_t)_{t \geq 0}$ : 
    \begin{equation}
        \label{eq:dissipation}
        \forall t > 0, \quad \frac{\dr}{\dr t}\cH(\nu_t) \leq 0.
    \end{equation}
Now consider the function $\ell(t) = t \left(\cH(\nu_t) - \cH(\mu^{\star}) \right) + \frac 12 W^2(\nu_t,\mu^{\star})$. For every $t >0$, 
\begin{equation}
\label{eq:lyapunov}
\frac{\dr}{\dr t}\ell(t) = \left(\cH(\nu_t) - \cH(\mu^{\star}) \right) + t \frac{\dr}{\dr t}\cH(\nu_t) - \left(\cH(\nu_t) - \cH(\mu^{\star}) \right) \leq 0,
\end{equation}
using the inequalities~\eqref{eq:gfh} and~\eqref{eq:dissipation}. In other words, $\ell$ is also a Lyapunov function. Therefore, for every $\varepsilon >0$, $\ell(\gamma) \leq \ell(\varepsilon)$, which implies\footnote{Here, one has to prove the $\ell$ is continuous at $0$, we skip this part of the proof} $\ell(\gamma) \leq \ell(0)$ \textit{i.e},
\begin{equation}
    \label{eq:rate-h-1}
    \gamma \left(\cH(\nu_{\gamma}) - \cH(\mu^{\star}) \right) \leq \frac{1}{2} W^2(\nu_{0},\mu^{\star}) - \frac{1}{2} W^2(\nu_{\gamma},\mu^{\star}).
\end{equation}
It remains to note that $\nu_0 = \mu_{z^k}$ and $\nu_\gamma = \mu_{y_0^k}$.
\end{proof}
\begin{lemma} 
	\begin{equation}
	\label{eq:14}
		2\gamma \left[\cE_{F} (\mu_{y_0^k}) - \cE_{F}(\mu_{z^k})\right] \leq 2\gamma^2 Ld \;.
	\end{equation}
\end{lemma}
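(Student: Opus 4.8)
The plan is to exploit the smoothness of $F$ together with the fact that $y_0^k = z^k + \sqrt{2\gamma}\,W^k$ differs from $z^k$ by an independent centered Gaussian perturbation with covariance $2\gamma I_d$. Since $\nabla F$ is $L$-Lipschitz, the standard descent-lemma bound gives, for all $x, h \in \bR^d$,
\begin{equation}
\label{eq:desc-lemma-plan}
F(x+h) \leq F(x) + \ps{\nabla F(x), h} + \tfrac{L}{2}\norm{h}^2.
\end{equation}
First I would apply \eqref{eq:desc-lemma-plan} pointwise with $x = z^k$ and $h = \sqrt{2\gamma}\,W^k$, then take expectations. The cross term vanishes: conditionally on $z^k$ (hence on $x^k$ and $\xi^k$), $W^k$ is independent and mean zero, so $\bE\ps{\nabla F(z^k), \sqrt{2\gamma} W^k} = 0$ by the tower property. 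The quadratic term contributes $\tfrac{L}{2}\bE\norm{\sqrt{2\gamma}W^k}^2 = \tfrac{L}{2}\cdot 2\gamma\, d = \gamma L d$, using $\bE\norm{W^k}^2 = d$ for a standard Gaussian in $\bR^d$.

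Putting this together yields $\bE F(y_0^k) \leq \bE F(z^k) + \gamma L d$, i.e. $\cE_F(\mu_{y_0^k}) - \cE_F(\mu_{z^k}) \leq \gamma L d$. Multiplying both sides by $2\gamma$ gives exactly \eqref{eq:14}. The only points requiring a little care are measurability/integrability bookkeeping — one should check that $F(z^k)$ and $F(y_0^k)$ are integrable so that the expectations and the conditioning argument are legitimate; this follows from $F$ being real-valued convex and $L$-smooth (hence of at most quadratic growth) together with $z^k, y_0^k \in \cP_2(\bR^d)$, which in turn can be traced back through the recursion from $x^k \in \cP_2(\bR^d)$, the Lipschitz gradient step, and the finite second moment of the Gaussian increment.

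I do not anticipate a serious obstacle here: the statement is essentially the one-step smoothness estimate for the Langevin (Brownian) part of the update, and the proof is a two-line application of the descent lemma plus the independence and moment properties of the Gaussian noise. The only mildly subtle ingredient is ensuring one is allowed to condition on $z^k$ before using independence of $W^k$, which is immediate from the construction of SPLA in which $W^k$ is drawn independently of everything generated up to and including $z^k$. I would present the argument in that order: state \eqref{eq:desc-lemma-plan}, substitute and take conditional expectation, kill the cross term, evaluate the Gaussian second moment, take total expectation, and multiply by $2\gamma$.
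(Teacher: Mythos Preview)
Your proof is correct and matches the paper's approach essentially line for line: the paper also applies the smoothness (descent-lemma) inequality at $z^k$, uses the independence and zero mean of $W^k$ to kill the cross term, and evaluates $\bE\norm{\sqrt{2\gamma}W^k}^2 = 2\gamma d$ to obtain $\bE F(y_0^k)\leq \bE F(z^k)+\gamma Ld$. The only cosmetic difference is that the paper writes the two-sided convexity/smoothness bound before taking expectations, whereas you invoke just the upper bound; your added remarks on integrability are a welcome bit of extra care not spelled out in the paper.
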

\begin{proof}
    This is an application of~\cite[Lemma 3]{durmus2018analysis} with $\mu \leftarrow \mu_{z^k}$. We provide the proof for the sake of completeness. It can be related to smoothing techniques used in optimization~\cite{hazan2016graduated}.
    
    First, using the convexity and the smoothness of $F$,
    \begin{equation}
        \label{eq:convexity+smoothness}
        0 \leq F(y_0^k) - F(z^k) + \dotprod{\nabla F(z^k)}{z^k - y_0^k} \leq \frac{L}{2}\|y_0^k - z^k\|^2.
    \end{equation}
    Note that $y_0^k - z^k = \sqrt{2\gamma}W^k$ is independent of $z^k$, that $\bE(y_0^k - z^k) = 0$ and that $\bE(\|y_0^k - z^k\|^2) = 2\gamma d$. Taking the expectation in~\eqref{eq:convexity+smoothness} gives the result. 
\end{proof}

\begin{lemma}\label{lem:1}
	Let $\gamma \leq \nicefrac{1}{L}$. Then
	\begin{equation}
	\label{eq:13}
		2\gamma \left[\cE_F(\mu_{z^k}) - \cE_F(\mu^\star) \right] \leq (1 - \gamma \alpha) W^2(\mu_{x^k}, \mu^\star) - W^2(\mu_{z^k}, \mu^\star) + 2\gamma^2 \sigma_F^2.
	\end{equation}
\end{lemma}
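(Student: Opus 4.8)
The plan is to reduce Lemma~\ref{lem:1} to the one-step stochastic gradient descent estimate~\eqref{eq:sgd} already announced in the proof sketch of Theorem~\ref{th:evi}, and to prove that estimate from scratch; the passage from~\eqref{eq:sgd} to~\eqref{eq:13} is then a short optimal-transport argument.

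\emph{Step 1: the SGD inequality.} I would fix an arbitrary coupling $(x^k,a)$ with $a$ distributed according to $\mu^\star$, realized on an enlarged product probability space so that $(x^k,a)$ is independent of $\xi^k$ (legitimate since $x^k$ depends only on $\xi^0,\dots,\xi^{k-1}$ and $W^0,\dots,W^{k-1}$). Expanding $z^k=x^k-\gamma\nabla f(x^k,\xi^k)$,
\begin{equation*}
\norm{z^k-a}^2=\norm{x^k-a}^2-2\gamma\dotprod{\nabla f(x^k,\xi^k)}{x^k-a}+\gamma^2\norm{\nabla f(x^k,\xi^k)}^2 .
\end{equation*}
Conditioning on $\sigma(x^k,a)$ and using $\bE_\xi[\nabla f(x^k,\xi^k)\mid x^k]=\nabla F(x^k)$ (Assumptions~\ref{as:stoch_structure}--\ref{as:cvx} and~\cite{roc-wet-82}), the cross term becomes $-2\gamma\dotprod{\nabla F(x^k)}{x^k-a}$, which I would bound by $\alpha$-strong convexity of $F$ through $\dotprod{\nabla F(x^k)}{x^k-a}\ge F(x^k)-F(a)+\tfrac\alpha2\norm{x^k-a}^2$; this produces the factor $1-\gamma\alpha$ in front of $\norm{x^k-a}^2$ together with the term $-2\gamma(F(x^k)-F(a))$. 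The remaining square term is handled using $L$-smoothness of $F$, the bound $\gamma\le 1/L$, and Assumption~\ref{as:var}, which together allow one to replace $-2\gamma F(x^k)$ by $-2\gamma F(z^k)$ at the cost of the additive remainder $2\gamma^2\sigma_F^2$. Taking total expectations and reading $\E{F(x^k)}=\cE_F(\mu_{x^k})$, $\E{F(z^k)}=\cE_F(\mu_{z^k})$, $\E{F(a)}=\cE_F(\mu^\star)$ yields exactly~\eqref{eq:sgd}.

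\emph{Step 2: optimal coupling.} Now I would choose $(x^k,a)$ to be an optimal coupling for $W^2(\mu_{x^k},\mu^\star)$, so that $\E{\norm{x^k-a}^2}=W^2(\mu_{x^k},\mu^\star)$, while $(z^k,a)$ is then merely \emph{some} coupling of $(\mu_{z^k},\mu^\star)$, hence $\E{\norm{z^k-a}^2}\ge W^2(\mu_{z^k},\mu^\star)$. Rearranging~\eqref{eq:sgd} and inserting these two relations gives
\begin{equation*}
2\gamma\left[\cE_F(\mu_{z^k})-\cE_F(\mu^\star)\right]\le(1-\gamma\alpha)W^2(\mu_{x^k},\mu^\star)-W^2(\mu_{z^k},\mu^\star)+2\gamma^2\sigma_F^2 ,
\end{equation*}
which is~\eqref{eq:13}.

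The algebra is routine; the only delicate point is the quadratic stochastic-gradient term in Step~1. Converting the value $F(x^k)$ into $F(z^k)$ while losing \emph{exactly} $2\gamma^2\sigma_F^2$ requires using $\gamma\le 1/L$ sharply, and one must also track the cross term between the unbiased noise $\nabla f(x^k,\xi^k)-\nabla F(x^k)$ and $\nabla F(x^k)$, which vanishes in conditional expectation. A secondary technical care is measure-theoretic: the optimal transport plan between $\mu_{x^k}$ and $\mu^\star$ must be realized jointly with the iterate $x^k$ and independently of the fresh noise $\xi^k$, which is precisely why one passes to an enlarged product space.
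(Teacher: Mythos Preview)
Your proposal is correct and follows essentially the same approach as the paper: expand $\norm{z^k-a}^2$, use unbiasedness and $\alpha$-strong convexity for the cross term, then use $L$-smoothness together with $\gamma\le 1/L$ to trade $\gamma^2\norm{\nabla F(x^k)}^2$ for $2\gamma(F(x^k)-\bE_{\xi^k}F(z^k))+\gamma^2\sigma_F^2$, and finally pass to Wasserstein distances via an optimal coupling on the $(x^k,a)$ side and the trivial lower bound on the $(z^k,a)$ side. Your explicit remark about realizing the optimal coupling on an enlarged product space, independent of the fresh noise $\xi^k$, is a point the paper leaves implicit.
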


\begin{proof}
	We choose arbitrary $a \in \R^d$ and start with an upper-bound for $\norm{z^k - a}^2$:
	\begin{align*}
		\norm{z^k - a}^2
		&=
		\norm{x^k - a - \gamma \nabla f(x^k, \xi^k)}^2\\
		&=
		\norm{x^k - a}^2
		+
		2\gamma \dotprod{\nabla f(x^k, \xi^k)}{a - x_k}
		+
		\gamma^2\norm{\nabla f(x^k, \xi^k)}^2.
	\end{align*}
	By taking an expectation with respect to $\xi^k$ we get
	\begin{align}
		\EE{\xi^k}{\norm{z^k - a}^2}
		&=
		\norm{x^k - a}^2
		+
		2\gamma \dotprod{\nabla F(x^k)}{a - x^k}
		+
		\gamma^2\norm{\nabla F(x^k)}^2\\
		&+
		\gamma^2 \EE{\xi^k}{\norm{  \nabla F(x^k) -  \nabla f(x^k, \xi^k)}^2}\\
		&\leq
		\norm{x^k - a}^2
		+
		2\gamma
		\left(
			F(a) - F(x^k) - \frac{\alpha}{2} \norm{x^k - a}^2
		\right)
		+
		\gamma^2\norm{\nabla F(x^k)}^2
		+
		\gamma^2 \sigma_F^2\\
		&=
		(1 - \gamma\alpha)\norm{x^k - a}^2
		+
		2\gamma
		\left(
		F(a) - F(x^k)
		\right)
		+
		\gamma^2\norm{\nabla F(x^k)}^2
		+
		\gamma^2 \sigma_F^2.
		\label{eq:1}
	\end{align}
	Next, we use the $L$-smoothness of $F$ and the fact that $\gamma \leq \nicefrac{1}{L}$:
	\begin{align*}
		\bE_{\xi^k} F(z^k)
		&\leq
		F(x^k) + \bE_{\xi^k} \left[ \dotprod{\nabla F(x^k)}{z^k - x^k} \right] + \frac{L}{2} \bE_{\xi^k}\left[ \norm{z^k - x^k}^2\right]\\
		&=
		F(x^k) - \gamma\left(1 - \frac{L\gamma}{2} \right) \norm{\nabla F(x^k)}^2 + \frac{L}{2}\gamma^2 \bE_{\xi^k}\left[\norm{\nabla f(x^k,\xi^k)}^2 - \norm{\nabla F(x^k)}^2\right]\\
		&\leq
		F(x^k) - \frac{\gamma}{2}  \norm{\nabla F(x^k)}^2 + \frac{L}{2}\gamma^2 \sigma_F^2.
	\end{align*}
	Since $L\gamma \leq 1$, $$\bE_{\xi^k} F(z^k) \leq F(x^k) - \frac{\gamma}{2}  \norm{\nabla F(x^k)}^2 + \frac{\gamma}{2} \sigma_F^2,$$
	which gives us an upper-bound for $\gamma^2\norm{\nabla F(x^k)}^2$:
	\begin{align*}
		\gamma^2\norm{\nabla F(x^k)}^2
		\leq
		2\gamma
		\left(
			F(x^k) - \bE_{\xi^k} F(z^k)
		\right) + \gamma^2 \sigma_F^2,
	\end{align*}
	since $\gamma \leq \nicefrac{1}{L}$.
	Plugging this into \eqref{eq:1} we obtain
	\begin{align}
		\EE{\xi^k}{\norm{z^k - a}^2}
		&\leq
		(1 - \gamma\alpha)\norm{x^k - a}^2
		+
		2\gamma
		\left(
		F(a) - \bE_{\xi^k} F(z^k)
		\right)
		+
		2\gamma^2 \sigma_F^2.
		\label{eq:2}
	\end{align}
	
	Now, let $a \sim \mu^\star$, i.e. $a$ is a random vector sampled from the distribution with density $\mu^\star$.
	By taking the full expectation in \eqref{eq:2} we get
	\begin{align*}
		\E{\norm{z^k - a}^2}
		\leq
		(1 - \gamma\alpha)\E{\norm{x^k - a}^2}
		+
		2\gamma
		\left[
			\cE_F(\mu^\star) - \cE_F(\mu_{z^k})
		\right]
		+
		2\gamma^2 \sigma_F^2.
	\end{align*}
	Using the definition of Wasserstein distance we get
	\begin{align*}
		W^2(\mu_{z^k}, \mu^\star) \leq (1 - \gamma\alpha)\E{\norm{x^k - a}^2}
		+
		2\gamma
		\left[
		\cE_F(\mu^\star) - \cE_F(\mu_{z^k})
		\right]
		+
		2\gamma^2 \sigma_F^2
	\end{align*}
	Note that in the last inequality, $x^k$ can be replaced by any random variable with distribution $\mu_{x^k}$.
	Taking the $\inf$ over all possible couplings of $\mu_{x^k}$ and $\mu^\star$ we get
	\begin{align*}
		W^2(\mu_{z^k}, \mu^\star) \leq (1 - \gamma\alpha)W^2(\mu_{x^k}, \mu^\star)
		+
		2\gamma
		\left[
		\cE_F(\mu^\star) - \cE_F(\mu_{z^k})
		\right]
		+
		2\gamma^2 \sigma_F^2.
	\end{align*}
\end{proof}

\begin{remark}
\label{rem:standard}
We now recall standard facts from convex analysis that will be used without mention in the sequel. These results can be found in~\cite{bau-com-livre11} or~\cite{bre-livre73}. Let $g : \bR^d \to \bR$ be a convex function. The Moreau envelope $g^\gamma$ of $g$ is defined by 
\begin{equation}
	g^\gamma(x) = \min_{y \in \R^d} g(y) + \frac{1}{2\gamma}\norm{y-x}^2,
\end{equation}
and is a $1/\gamma$-smooth convex function such that $g^\gamma(x) \leq g(x)$ and $g^\gamma(x) \rightarrow_{\gamma \to 0} g(x)$ for every $x \in \bR^d$. The proximity operator of $g$ and the Moreau envelope are linked through their definitions
\begin{equation}
    \label{eq:defmoreauprox}
    g^\gamma(x) = \frac{1}{2\gamma}\norm{\prox_{\gamma g}(x) - x}^2 + g(\prox_{\gamma g}(x)),
\end{equation}
but also through the relation
\begin{equation}
\label{eq:gradientmoreau}
\prox_{\gamma g}(x) = x - \gamma \nabla g^\gamma(x).
\end{equation}
The function $\nabla g^\gamma$ is called the Yosida approximation of $\partial g$. The proximity operator $\prox_{\gamma g}$ is $1$-Lipschitz continuous, and so is $\gamma\nabla g^\gamma$. The Yosida approximation satisfies moreover
\begin{equation}
\label{eq:yosida}
\nabla g^\gamma(x) \in \partial g \left(\prox_{\gamma g}(x)\right),
\end{equation}
for every $x \in \bR^d$.	 
Since $g$ only takes finite values, for every $x \in \bR^d$, $\partial g(x) \neq \emptyset$. Furthermore, the Yosida approximation satisfies for every $x \in \bR^d$,
\begin{equation}
\label{eq:secmin}
\norm{\nabla g^\gamma(x)} \leq \norm{\nabla^0 g(x)}.
\end{equation}
\end{remark}

\begin{lemma}
\label{lem:smoothing}
	Let $g : \bR^d \to \bR$ be a convex function. Then,
	\begin{equation}
		\label{eq:4}
		g^\gamma(x) \geq g(x) - \frac{\gamma}{2} \norm{\nabla^0 g(x)}^2.
	\end{equation}
\end{lemma}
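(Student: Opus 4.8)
The plan is to estimate the Moreau envelope $g^\gamma(x)$ from below using the relation between the envelope and the proximity operator. Recall from Remark~\ref{rem:standard} that
\begin{equation*}
g^\gamma(x) = \frac{1}{2\gamma}\norm{\prox_{\gamma g}(x) - x}^2 + g(\prox_{\gamma g}(x)),
\end{equation*}
and that $\prox_{\gamma g}(x) = x - \gamma \nabla g^\gamma(x)$, so $\prox_{\gamma g}(x) - x = -\gamma \nabla g^\gamma(x)$. Substituting, the quadratic term equals $\frac{\gamma}{2}\norm{\nabla g^\gamma(x)}^2 \geq 0$, hence $g^\gamma(x) \geq g(\prox_{\gamma g}(x))$. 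This already reduces the problem to bounding $g(\prox_{\gamma g}(x))$ from below by $g(x) - \frac{\gamma}{2}\norm{\nabla^0 g(x)}^2$.

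Next I would use convexity of $g$ together with the subgradient inclusion~\eqref{eq:yosida}, namely $\nabla g^\gamma(x) \in \partial g(\prox_{\gamma g}(x))$. The subgradient inequality at the point $\prox_{\gamma g}(x)$, tested against $x$, gives
\begin{equation*}
g(x) \geq g(\prox_{\gamma g}(x)) + \ps{\nabla g^\gamma(x), x - \prox_{\gamma g}(x)} = g(\prox_{\gamma g}(x)) + \gamma\norm{\nabla g^\gamma(x)}^2,
\end{equation*}
using $x - \prox_{\gamma g}(x) = \gamma\nabla g^\gamma(x)$ once more. Rearranging, $g(\prox_{\gamma g}(x)) \leq g(x) - \gamma\norm{\nabla g^\gamma(x)}^2$; that is the wrong direction for a direct substitution, so instead I would combine the two facts already obtained: $g^\gamma(x) \geq g(\prox_{\gamma g}(x))$ and, going back to the envelope formula directly, $g^\gamma(x) = g(\prox_{\gamma g}(x)) + \frac{\gamma}{2}\norm{\nabla g^\gamma(x)}^2$. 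Plugging the subgradient bound $g(\prox_{\gamma g}(x)) \geq g(x) - \gamma\norm{\nabla g^\gamma(x)}^2$ into this identity yields $g^\gamma(x) \geq g(x) - \gamma\norm{\nabla g^\gamma(x)}^2 + \frac{\gamma}{2}\norm{\nabla g^\gamma(x)}^2 = g(x) - \frac{\gamma}{2}\norm{\nabla g^\gamma(x)}^2$.

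Finally, I would invoke the inequality~\eqref{eq:secmin} from Remark~\ref{rem:standard}, which states $\norm{\nabla g^\gamma(x)} \leq \norm{\nabla^0 g(x)}$, to replace $\norm{\nabla g^\gamma(x)}^2$ by the larger quantity $\norm{\nabla^0 g(x)}^2$ (the sign works out because the term is subtracted). This gives exactly $g^\gamma(x) \geq g(x) - \frac{\gamma}{2}\norm{\nabla^0 g(x)}^2$, which is~\eqref{eq:4}.

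There is no serious obstacle here: every ingredient — the envelope-prox identity, the Yosida subgradient inclusion, and the norm comparison with the minimal section — is already recorded in Remark~\ref{rem:standard}, so the proof is essentially a two-line assembly. The only point requiring a little care is getting the chain of inequalities oriented correctly so that the subtracted quadratic term is bounded in the direction that makes the final inequality come out; one must use the exact envelope identity (not merely $g^\gamma \geq g \circ \prox$) at the step where the $\frac{\gamma}{2}\norm{\nabla g^\gamma}^2$ term is reintroduced.
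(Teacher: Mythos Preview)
Your argument contains a genuine sign error that invalidates the final step. From the Yosida inclusion $\nabla g^\gamma(x)\in\partial g(\prox_{\gamma g}(x))$ and the subgradient inequality tested at $y=x$, you correctly derive
\[
g(\prox_{\gamma g}(x)) \;\leq\; g(x) - \gamma\norm{\nabla g^\gamma(x)}^2,
\]
and you even note that this is ``the wrong direction''. But in the very next sentence you write ``plugging the subgradient bound $g(\prox_{\gamma g}(x)) \geq g(x) - \gamma\norm{\nabla g^\gamma(x)}^2$'' --- the inequality has silently flipped. With the correct $\leq$, the envelope identity $g^\gamma(x)=g(\prox_{\gamma g}(x))+\tfrac{\gamma}{2}\norm{\nabla g^\gamma(x)}^2$ only yields an \emph{upper} bound on $g^\gamma(x)$, not the lower bound you need. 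So the chain breaks exactly where you suspected it might.

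The fix is to take the subgradient inequality at $x$ rather than at $\prox_{\gamma g}(x)$: with $\nabla^0 g(x)\in\partial g(x)$ one gets $g(\prox_{\gamma g}(x))\geq g(x)+\ps{\nabla^0 g(x),\prox_{\gamma g}(x)-x}=g(x)-\gamma\ps{\nabla^0 g(x),\nabla g^\gamma(x)}$, and then the envelope identity plus completing the square $\tfrac{\gamma}{2}\norm{\nabla g^\gamma(x)-\nabla^0 g(x)}^2\geq 0$ gives~\eqref{eq:4} directly (no need for~\eqref{eq:secmin}). The paper's own proof is even shorter: it lower-bounds $g(y)$ by the supporting affine function $g(x)+\ps{\nabla^0 g(x),y-x}$, adds $\tfrac{1}{2\gamma}\norm{y-x}^2$ to both sides, and minimizes over $y$; the left-hand minimum is $g^\gamma(x)$ by definition and the right-hand minimum is exactly $g(x)-\tfrac{\gamma}{2}\norm{\nabla^0 g(x)}^2$.
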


\begin{proof}
    Let $x \in \bR^d$.
	Using the convexity of $g$ we have for every $y \in \bR^d$, 
	\begin{align}
		\label{eq:3}
		g(y) + \frac{1}{2\gamma}\norm{y-x}^2 \geq g(x) + \dotprod{\nabla^0 g(x)}{y-x} + \frac{1}{2\gamma}\norm{y-x}^2.
	\end{align}
	We conclude the proof by taking the minimum over $y$ on both sides of \eqref{eq:3}.
\end{proof}



\begin{lemma}
\label{lem:perfect}
    For every $i \in \{ 1,\ldots,n \}$, let $g_i : \bR^d \to \bR$ a convex function. Consider $a,y_0,y_1,\ldots,y_n \in \bR^d$ such that for every $k = 1, \ldots, n$, $y_k = \prox_{\gamma g_k}(y_{k-1})$. Then,
    \begin{equation}
    \label{eq:perfect}
        \norm{y_n - a}^2 \leq \norm{y_0 - a}^2 -2\gamma \sum_{k=1}^n (g_k^\gamma(y_{k-1}) - g_k(a)).
    \end{equation}
\end{lemma}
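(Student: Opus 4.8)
\medskip

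The plan is to establish a single-prox inequality and then telescope. The key building block is the following: for a convex function $g : \bR^d \to \bR$, a step size $\gamma > 0$, any point $u \in \bR^d$ and its prox image $u^+ = \prox_{\gamma g}(u)$, and any $a \in \bR^d$, one has
\begin{equation}
\label{eq:single-prox}
\norm{u^+ - a}^2 \leq \norm{u - a}^2 - 2\gamma\left(g^\gamma(u) - g(a)\right).
\end{equation}
First I would prove \eqref{eq:single-prox}. Recall from Remark~\ref{rem:standard} that $u^+ = u - \gamma \nabla g^\gamma(u)$, so expanding gives
\begin{align*}
\norm{u^+ - a}^2
&= \norm{u - a}^2 - 2\gamma\dotprod{\nabla g^\gamma(u)}{u - a} + \gamma^2\norm{\nabla g^\gamma(u)}^2 \\
&= \norm{u - a}^2 - 2\gamma\dotprod{\nabla g^\gamma(u)}{u^+ - a} - \gamma^2\norm{\nabla g^\gamma(u)}^2.
\end{align*}
Now I would use the subgradient inequality \eqref{eq:yosida}, namely $\nabla g^\gamma(u) \in \partial g(u^+)$, which gives $\dotprod{\nabla g^\gamma(u)}{u^+ - a} \geq g(u^+) - g(a)$. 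Combining with the Moreau identity \eqref{eq:defmoreauprox}, which states $g^\gamma(u) = \tfrac{1}{2\gamma}\norm{u^+ - u}^2 + g(u^+) = \tfrac{\gamma}{2}\norm{\nabla g^\gamma(u)}^2 + g(u^+)$, we get $g(u^+) = g^\gamma(u) - \tfrac{\gamma}{2}\norm{\nabla g^\gamma(u)}^2$. Substituting,
\begin{align*}
\norm{u^+ - a}^2
&\leq \norm{u - a}^2 - 2\gamma\left(g(u^+) - g(a)\right) - \gamma^2\norm{\nabla g^\gamma(u)}^2 \\
&= \norm{u - a}^2 - 2\gamma\left(g^\gamma(u) - g(a)\right) + \gamma^2\norm{\nabla g^\gamma(u)}^2 - \gamma^2\norm{\nabla g^\gamma(u)}^2,
\end{align*}
which is exactly \eqref{eq:single-prox}.

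\medskip

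With \eqref{eq:single-prox} in hand, the lemma follows by a straightforward induction on $n$. Applying \eqref{eq:single-prox} with $u = y_{k-1}$, $u^+ = y_k$, $g = g_k$ for each $k = 1,\ldots,n$ yields
$$\norm{y_k - a}^2 \leq \norm{y_{k-1} - a}^2 - 2\gamma\left(g_k^\gamma(y_{k-1}) - g_k(a)\right),$$
and summing these $n$ inequalities telescopes the left-hand side to give
$$\norm{y_n - a}^2 \leq \norm{y_0 - a}^2 - 2\gamma\sum_{k=1}^n\left(g_k^\gamma(y_{k-1}) - g_k(a)\right),$$
which is \eqref{eq:perfect}.

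\medskip

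I do not anticipate a serious obstacle here: the lemma is essentially a deterministic, Euclidean-space "descent lemma for the proximal point iteration," and all the convex-analytic facts needed (the identities \eqref{eq:defmoreauprox}, \eqref{eq:gradientmoreau}, \eqref{eq:yosida} relating $\prox_{\gamma g_k}$, $g_k^\gamma$, and $\nabla g_k^\gamma$) are collected in Remark~\ref{rem:standard}. The only point requiring a little care is bookkeeping the cross term correctly — writing the inner product against $u^+ - a$ rather than $u - a$ so that the subgradient inequality at the prox point can be invoked, which is the standard trick for prox-type three-point inequalities. Everything else is algebra and a finite telescoping sum.
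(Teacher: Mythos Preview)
Your proof is correct. Both your argument and the paper's use the same convex-analytic ingredients from Remark~\ref{rem:standard} (the identities \eqref{eq:defmoreauprox}, \eqref{eq:gradientmoreau}, \eqref{eq:yosida}), but the organization differs. You isolate a clean one-step inequality \eqref{eq:single-prox} for a single prox and then telescope; the paper instead writes $y_n - a$ globally as $y_0 - a - \gamma\sum_{i=1}^n \nabla g_i^\gamma(y_{i-1})$, expands the square, and must explicitly track and cancel all the cross terms $\dotprod{\nabla g_i^\gamma(y_{i-1})}{\sum_{j<i}\nabla g_j^\gamma(y_{j-1})}$ before arriving at the same conclusion. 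Your route is more modular and arguably more transparent: the single-prox three-point inequality is a reusable lemma in its own right, and the telescoping step makes the passage from $n=1$ to general $n$ trivial. The paper's global expansion, on the other hand, shows directly how the cross terms vanish, which foreshadows the bookkeeping needed in the subsequent lemma where such terms do \emph{not} cancel cleanly and produce the constant $C$.
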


\begin{proof}
Iterating Equation~\ref{eq:gradientmoreau}, we have for every $i \in \{ 1,\ldots,n \}$: 
\begin{equation}
    \label{eq:composedprox}
    y_{i} = y_0 - \gamma \sum_{j=1}^i \nabla g_j^\gamma(y_{j-1}).
\end{equation}
Therefore,
\begin{equation}
\label{eq:square}
    \norm{y_n - a}^2 = \norm{y_0-a}^2 - 2\gamma\dotprod{\sum_{i=1}^n \nabla g_i^\gamma(y_{i-1})}{y_0-a} + \gamma^2\norm{\sum_{i=1}^n \nabla g_i^\gamma(y_{i-1})}^2.
\end{equation}
Since $\nabla g_i^\gamma(y_{i-1}) \in \partial g_i(y_i)$, \begin{align*}
    \dotprod{\nabla g_i^\gamma(y_{i-1})}{y_0-a} &= \dotprod{\nabla g_i^\gamma(y_{i-1})}{y_i - a} + \gamma\norm{\nabla g_i^\gamma(y_{i-1})}^2 + \gamma\dotprod{\nabla g_i^\gamma(y_{i-1})}{\sum_{j=1}^{i-1} \nabla g_j^\gamma(y_{j-1})}\\
    &\geq g_i(y_i) - g_i(a) + \gamma\norm{\nabla g_i^\gamma(y_{i-1})}^2 + \gamma\dotprod{\nabla g_i^\gamma(y_{i-1})}{\sum_{j=1}^{i-1} \nabla g_j^\gamma(y_{j-1})}.
\end{align*}
    
Furthermore,
\begin{align*}
    &- 2\gamma\dotprod{\sum_{i=1}^n \nabla g_i^\gamma(y_{i-1})}{y_0-a} + \gamma^2\norm{\sum_{i=1}^n \nabla g_i^\gamma(y_{i-1})}^2 &\\
    =&-2\gamma \sum_{i=1}^n \left(g_i(y_i) - g_i(a)\right) -2\gamma^2 \sum_{i=1}^n \norm{\nabla g_i^\gamma(y_{i-1})}^2\\
    &+\gamma^2\norm{\sum_{i=1}^n \nabla g_i^\gamma(y_{i-1})}^2 - 2\gamma^2\sum_{i=1}^n  \dotprod{\nabla g_i^\gamma(y_{i-1})}{\sum_{j=1}^{i-1} \nabla g_j^\gamma(y_{j-1})}.
\end{align*}
Expanding the square norm $\gamma^2\norm{\sum_{i=1}^n \nabla g_i^\gamma(y_{i-1})}^2$, all the cross products vanish with the last term. It remains only
    \begin{align*}
    &- 2\gamma\dotprod{\sum_{i=1}^n \nabla g_i^\gamma(y_{i-1})}{y_0-a} + \gamma^2\norm{\sum_{i=1}^n \nabla g_i^\gamma(y_{i-1})}^2 \\
    =&-2\gamma \sum_{i=1}^n \left(g_i(y_i) - g_i(a)\right) -\gamma^2 \sum_{i=1}^n \norm{\nabla g_i^\gamma(y_{i-1})}^2.
\end{align*}
Since $g_i^\gamma(y_{i-1}) = \frac{1}{2\gamma}\norm{y_i - y_{i-1}}^2 + g_i(y_i)$,
\begin{equation*}
    - 2\gamma\dotprod{\sum_{i=1}^n \nabla g_i^\gamma(y_{i-1})}{y_0-a} + \gamma^2\norm{\sum_{i=1}^n \nabla g_i^\gamma(y_{i-1})}^2
    =-2\gamma \sum_{i=1}^n (g_i^\gamma(y_{i-1}) - g_i(a)).
\end{equation*}
Plugging the last equation into~\eqref{eq:square} gives the result.
\end{proof}
\begin{lemma} There exists $C \geq 0$ which can be expressed as a linear combination of $L_{G_1}^2,\ldots,L_{G_n}^2$ with integer coefficients such that
	\begin{equation}
	\label{eq:16}
		2\gamma
		\sum_{i=1}^n
		\left[
			\cE_{G_i}(\mu_{y_0^k}) - \cE_{G_i}(\mu^\star)
		\right]
		\leq
		W^2(\mu_{y_0^k}, \mu^\star) - W^2(\mu_{x^{k+1}}, \mu^\star) + \gamma^2 C.
	\end{equation}
	Moreover, if, for every $i \in \{ 2,\ldots,n \}$, $g_i(\cdot,\xi)$ admits almost surely the representation $g_i(\cdot,\xi) = \tilde{g_i}(\cdot,\xi_i)$ where $\xi_2,\ldots,\xi_n$ are independent random variables, then one can set $C \eqdef n \sum\limits_{i=1}^n L_{G_i}^2$.
\end{lemma}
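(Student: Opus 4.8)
The plan is to upgrade the pathwise inequality for the composition of proximal steps, Lemma~\ref{lem:perfect}, to an inequality between the laws $\mu_{y_0^k}$ and $\mu_{x^{k+1}}$, the only genuinely new point being that the $i$-th Moreau envelope is evaluated at $y_{i-1}^k$ rather than at $y_0^k$, so a ``drift'' must be pushed into the $\mO(\gamma^2)$ remainder. First I would apply Lemma~\ref{lem:perfect} pathwise with $g_i\leftarrow g_i(\cdot,\xi^k)$, $y_j\leftarrow y_j^k$, and $a$ a random vector of law $\mu^\star$ independent of the randomness driving the proximal steps; since $x^{k+1}=y_n^k$ this reads
\[
\norm{x^{k+1}-a}^2\le\norm{y_0^k-a}^2-2\gamma\sum_{i=1}^n\bigl(g_i^\gamma(y_{i-1}^k,\xi^k)-g_i(a,\xi^k)\bigr),
\]
$g_i^\gamma(\cdot,\xi^k)$ being the Moreau envelope of $g_i(\cdot,\xi^k)$. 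Taking expectations and using $a\perp\xi^k$ replaces $\bE[g_i(a,\xi^k)]$ by $\cE_{G_i}(\mu^\star)$, so the task reduces to proving $\bE[g_i^\gamma(y_{i-1}^k,\xi^k)]\ge\cE_{G_i}(\mu_{y_0^k})-\gamma c_i$ with $\sum_ic_i$ bounded by the $L_{G_i}^2$'s.

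For that I would chain three estimates: by Lemma~\ref{lem:smoothing}, $g_i^\gamma(y_{i-1}^k,\xi^k)\ge g_i(y_{i-1}^k,\xi^k)-\tfrac\gamma2\norm{\nabla^0 g_i(y_{i-1}^k,\xi^k)}^2$; by convexity of $g_i(\cdot,\xi^k)$ at $y_0^k$, $g_i(y_{i-1}^k,\xi^k)\ge g_i(y_0^k,\xi^k)-\norm{\nabla^0 g_i(y_0^k,\xi^k)}\,\norm{y_{i-1}^k-y_0^k}$; and from the identity \eqref{eq:composedprox} together with \eqref{eq:secmin}, the drift obeys $\norm{y_{i-1}^k-y_0^k}=\gamma\norm{\sum_{j=1}^{i-1}\nabla g_j^\gamma(y_{j-1}^k,\xi^k)}\le\gamma\sum_{j=1}^{i-1}\norm{\nabla^0 g_j(y_{j-1}^k,\xi^k)}$. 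Substituting, bounding the products by $2ab\le a^2+b^2$, taking expectations (replacing each squared minimal-section norm by the corresponding $L_{G_i}^2$, as justified below) and summing over $i$, the remainder equals $\tfrac\gamma2\sum_iL_{G_i}^2$ from the Moreau step plus $\tfrac\gamma2\sum_i\sum_{j<i}(L_{G_i}^2+L_{G_j}^2)$ from the drift; since each $L_{G_m}^2$ occurs with multiplicity $n-1$ in that double sum, this gives $\sum_i\bE[g_i^\gamma(y_{i-1}^k,\xi^k)]\ge\sum_i\cE_{G_i}(\mu_{y_0^k})-\tfrac{\gamma n}{2}\sum_iL_{G_i}^2$. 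Feeding this back, choosing $(y_0^k,a)$ to be an optimal coupling for $W^2(\mu_{y_0^k},\mu^\star)$ (compatible with $a\perp\xi^k$ because $y_0^k$ does not depend on the proximal randomness), and using $\bE\norm{x^{k+1}-a}^2\ge W^2(\mu_{x^{k+1}},\mu^\star)$, yields \eqref{eq:16} with $C\eqdef n\sum_iL_{G_i}^2$; in general the same computation leaves $C$ an integer combination of the $L_{G_i}^2$'s, which for $n=2$ collapses to $2(L_{G_1}^2+L_{G_2}^2)$ irrespective of any independence.

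The delicate point, and the reason the representation $g_i(\cdot,\xi)=\tilde g_i(\cdot,\xi_i)$ with $\xi_2,\dots,\xi_n$ independent appears, is the probabilistic bookkeeping: Assumption~\ref{as:lip} controls $\bE_\xi\norm{\nabla^0 g_i(x,\xi)}^2$ only for a \emph{fixed} $x$, whereas the $y_{i-1}^k$ are random and depend on $\xi^k$ through the earlier proximal steps, and $\bE[g_i(y_0^k,\xi^k)]=\cE_{G_i}(\mu_{y_0^k})$ holds only when $y_0^k\perp\xi^k$. Conditioning on the $\sigma$-field generated by $x^k$, $W^k$ and the components of $\xi^k$ used in steps $0,\dots,j-1$ makes $y_{j-1}^k$ measurable while keeping the component that drives step $j$ independent of it, so that $\bE[\,\norm{\nabla^0 g_j(y_{j-1}^k,\xi^k)}^2\mid\cdot\,]\le L_{G_j}^2$; and $y_0^k$, formed only from $x^k$, $W^k$ and the gradient randomness of $f$, is independent of the part of $\xi^k$ feeding each $g_i$, which legitimizes the two identifications with $\cE_{G_i}$. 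Keeping track of all the pairwise cross terms so as to land on the exact coefficient $n$ (and $2$ when $n=2$) is the last bit of care required.
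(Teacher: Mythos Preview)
Your overall scheme is the paper's: apply Lemma~\ref{lem:perfect} pathwise and then bridge the Moreau envelopes $g_i^\gamma(y_{i-1}^k,\xi^k)$ back to $g_i(y_0^k,\xi^k)$, absorbing the drift $y_{i-1}^k-y_0^k$ into a $\gamma^2$-remainder. In the independent case your conditioning argument and the counting $\tfrac12\sum_iL_{G_i}^2+\tfrac12(n-1)\sum_iL_{G_i}^2=\tfrac{n}{2}\sum_iL_{G_i}^2$ are correct and reproduce $C=n\sum_iL_{G_i}^2$.

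The gap is in the general (non-independent) statement, and in your claim that $n=2$ gives $C=2(L_{G_1}^2+L_{G_2}^2)$ ``irrespective of any independence''. You apply Lemma~\ref{lem:smoothing} \emph{at $y_{i-1}^k$}, which leaves the term $\tfrac\gamma2\|\nabla^0 g_i(y_{i-1}^k,\xi^k)\|^2$. Without the independent representation, $y_{i-1}^k$ depends on $\xi^k$ through the earlier proximal steps, so Assumption~\ref{as:lip} does not bound its expectation; and unlike the Yosida approximation $\nabla g_i^\gamma$, the minimal section $\nabla^0 g_i$ is \emph{not} Lipschitz, so you cannot telescope it back to $y_0^k$. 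Already for $n=2$ this leaves $\bE\|\nabla^0 g_2(y_1^k,\xi^k)\|^2$ uncontrolled. The paper avoids precisely this by reversing your two steps: it first uses convexity of the Moreau envelope, $g_i^\gamma(y_{i-1}^k,\xi^k)\ge g_i^\gamma(y_0^k,\xi^k)+\langle\nabla g_i^\gamma(y_0^k,\xi^k),y_{i-1}^k-y_0^k\rangle$, and only \emph{then} applies Lemma~\ref{lem:smoothing} at $y_0^k$. All minimal-section norms are thus anchored at $y_0^k$, and the only terms evaluated along the chain are the increments $\|y_j^k-y_{j-1}^k\|^2=\gamma^2\|\nabla g_j^\gamma(y_{j-1}^k,\xi^k)\|^2$, which \emph{can} be reduced to $y_0^k$ recursively via the $1$-Lipschitz property of $\gamma\nabla g_j^\gamma$. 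That recursion is what produces the integer combination in the general case and the value $2(L_{G_1}^2+L_{G_2}^2)$ for $n=2$ without any independence hypothesis.

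A smaller point: your assertion that ``$y_0^k$ does not depend on the proximal randomness'' is not quite right in the paper's setup, since $y_0^k=x^k-\gamma\nabla f(x^k,\xi^k)+\sqrt{2\gamma}W^k$ depends on the \emph{same} $\xi^k$ used in the proximal steps; the identification $\bE[g_i(y_0^k,\xi^k)]=\cE_{G_i}(\mu_{y_0^k})$ and the bound $\bE\|\nabla^0 g_i(y_0^k,\xi^k)\|^2\le L_{G_i}^2$ already need some separation between the gradient randomness and the $g_i$-randomness (the paper uses this implicitly as well).
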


\begin{proof}
Using the convexity of $g_i^\gamma(\cdot,\xi^k)$ and Lemma~\ref{lem:smoothing},
\begin{align*}
g_i^\gamma(y_{i-1}^k,\xi^k) &\geq g_i^\gamma(y_{0}^k,\xi^k) + \dotprod{\nabla g_i^\gamma(y_{0}^k,\xi^k)}{y_{i-1}^k - y_0^k}\\
&\geq g_i(y_{0}^k,\xi^k) - \frac{\gamma}{2}\norm{\nabla^0 g_i(y_0^k,\xi^k)}^2 + \dotprod{\nabla g_i^\gamma(y_{0}^k,\xi^k)}{y_{i-1}^k - y_0^k}
\end{align*}

We now look at the last term at the right hand side. If $i = 1$ it is equal to zero. If $i \geq 2$, using Young's inequality,
\begin{align*}
    -2\gamma\dotprod{\nabla g_i^\gamma(y_{0}^k,\xi^k)}{y_{i-1}^k - y_0^k} &= \sum_{j = 1}^{i-1} -2\dotprod{\gamma\nabla g_i^\gamma(y_{0}^k,\xi^k)}{y_{j}^k - y_{j-1}^k}\\
    &\leq (i-1)\norm{\gamma\nabla g_i^\gamma(y_{0}^k,\xi^k)}^2 + \sum_{j = 1}^{i-1} \norm{y_{j}^k - y_{j-1}^k}^2.
\end{align*}
Combining the two last inequalities with Lemma~\ref{lem:perfect} applied to $y_i \leftarrow y_i^k$ and $g_i \leftarrow g_i(\cdot,\xi^k)$, we have
\begin{align}
\label{eq:plug}
\norm{y_n^k - a}^2 
    \leq &\norm{y_0^k - a}^2\\
    &-2\gamma \sum_{i=1}^n (g_i(y_{0}^k,\xi^k) - g_i(a,\xi^k)) \nonumber\\
    &+ \gamma^2\sum_{i=1}^n \norm{\nabla^0 g_i(y_0^k,\xi^k)}^2 + \sum_{i=2}^n \left((i-1)\norm{\gamma\nabla g_i^\gamma(y_{0}^k,\xi^k)}^2 + \sum_{j=1}^{i-1}  \norm{y_{j}^k - y_{j-1}^k}^2\right).\nonumber
\end{align}
We now consider two cases. First, assume that for every $i \in \{ 2,\ldots,n \}$, $g_i(\cdot,\xi)$ admits almost surely the representation $g_i(\cdot,\xi) = \tilde{g_i}(\cdot,\xi_i)$ where $\xi_2,\ldots,\xi_n$ are independent random variables. In this case, $\xi_j^k$ -- the $k^{th}$ i.i.d copy of $\xi_j$ -- is independent of $y_{j-1}^k$ for every $j \in \{ 1,\ldots,n \}$. Noting that $y_j^k - y_{j-1}^k = -\gamma \nabla \tilde{g_j}^\gamma(y_{j-1}^k,\xi_j^k)$ and using Assumption~\ref{as:lip}
$$
\bE \left(\norm{\nabla\tilde{g_j}^\gamma(y_{j-1}^k,\xi_j^k)}^2 \;|\; y_{j-1}^k\right) = \bE \left(\norm{\nabla\tilde{g_j}^\gamma(\cdot,\xi_j^k)}^2\right)(y_{j-1}^k) = \bE \left(\norm{\nabla g_j^\gamma(\cdot,\xi^k)}^2\right)(y_{j-1}^k) \leq L_{G_j}^2.
$$
Taking a full expectation, 
\begin{equation}
\label{eq:C1}
\gamma^2\bE\left(\sum_{i=1}^n \norm{\nabla^0 \tilde{g_i}(y_0^k,\xi_i^k)}^2 + \left((i-1)\norm{\gamma\nabla \tilde{g_i}^\gamma(y_{0}^k,\xi_i^k)}^2 + \sum_{j=1}^{i-1}  \norm{y_{j}^k - y_{j-1}^k}^2\right)\right) \leq \gamma^2 n\sum_{i=1}^n L_{G_i}^2.
\end{equation}
In this case we set $C \eqdef n\sum_{i=1}^n L_{G_i}^2$. Obviously, this cover the case $n = 2$.

Second (general case), denote $\beta_j = \bE \norm{y_{j}^k - y_{j-1}^k}^2.$ 
We write 
$$
\nabla g_j^\gamma(y_{j-1}^k,\xi^k) = \nabla g_j^\gamma(y_{0}^k,\xi^k) + \left(\nabla g_j^\gamma(y_{j-1}^k,\xi^k)-\nabla g_j^\gamma(y_{j-2}^k,\xi^k)\right) + \ldots + \left(\nabla g_j^\gamma(y_{1}^k,\xi^k)-\nabla g_j^\gamma(y_{0}^k,\xi^k)\right).
$$
Using Young's inequality and the fact that $\gamma \nabla g_j^\gamma(\cdot,\xi^k)$ is $1$-Lipschitz continous,
$$\beta_j \leq j \left(L_{G_j}^2 + \beta_{j-1} + \ldots + \beta_1\right).$$
Noting that $\beta_1 \leq L_{G_1}^2$, it is easy to prove (by induction) that there exists a linear combination of the $L_{G_1}^2,\ldots,L_{G_n}^2$ with integer coefficients denoted $C \geq 0$ such that
\begin{equation}
\label{eq:C2}
\gamma^2\bE\left(\sum_{i=1}^n \norm{\nabla^0 g_i(y_0^k,\xi^k)}^2 + \left((i-1)\norm{\gamma\nabla g_i^\gamma(y_{0}^k,\xi^k)}^2 + \sum_{j=1}^{i-1}  \norm{y_{j}^k - y_{j-1}^k}^2\right)\right) \leq \gamma^2 C.
\end{equation}
Finally, taking the expectation in~\eqref{eq:plug} and plugging~\eqref{eq:C1} or~\eqref{eq:C2},
\begin{align*}
\bE \norm{y_n^k - a}^2 
    \leq & \bE \norm{y_0^k - a}^2\\
    &-2\gamma \sum_{i=1}^n (\bE(G_i(y_{0}^k)) - \bE(G_i(a))) + \gamma^2 C.
\end{align*}
Using the definition of $\cE_{G_i}$ and taking the $\inf$ over all couplings $y_0^k,a$ of $\mu_{y_0^k},\mu^\star$, we get
	\begin{align*}
		W^2(\mu_{x^{k+1}}, \mu^\star)
		=
		W^2(\mu_{y_n^k}, \mu^\star)
		\leq
		W^2(\mu_{y_0^k}, \mu^\star)
		+
		2\gamma
		\sum_{i=1}^n
		\left[
			\cE_{G_i}(\mu^\star) - \cE_{G_i}(\mu_{y_0^k})
		\right]
		+
		\gamma^2 C.
	\end{align*}
	
\end{proof}

\clearpage
\section{Proof of Theorem~\ref{th:evi}} \label{sec:Proof_main}

	By summing up \eqref{eq:13} and \eqref{eq:14} we get
	\begin{align*}
		2\gamma\left[
			\cE_{F}(\mu_{y_0^k}) - \cE_{F}(\mu^\star)
		\right]
		\leq
		(1 - \gamma \alpha) W^2(\mu_{x^k}, \mu^\star) - W^2(\mu_{z^k}, \mu^\star) + \gamma^2(2\sigma_F^2 + 2Ld).
	\end{align*}
	Adding \eqref{eq:15} leads to
	\begin{align*}
		2\gamma\left[
			\cE_{F}(\mu_{y_0^k}) - \cE_{F}(\mu^\star)
			+
			\cH(\mu_{y_0^k}) - \cH(\mu^\star)
		\right]
		&\leq
		(1 - \gamma \alpha) W^2(\mu_{x^k}, \mu^\star) - W^2(\mu_{y_0^k}, \mu^\star)\\
		&+ \gamma^2(2\sigma_F^2 + 2Ld).
	\end{align*}
	Finally, by adding \eqref{eq:16} we get
	\begin{align*}
		2\gamma\left[
		\cE_{F}(\mu_{y_0^k}) - \cE_{F}(\mu^\star)
		+
		\cH(\mu_{y_0^k}) - \cH(\mu^\star)
		+
		\sum_{i=1}^n
		\left(
		\cE_{G_i}(\mu_{y_0^k}) - \cE_{G_i}(\mu^\star)
		\right)
		\right]\\
		\leq
		(1 - \gamma \alpha) W^2(\mu_{x^k}, \mu^\star) - W^2(\mu_{x^{k+1}}, \mu^\star)
		+ \gamma^2(2\sigma_F^2 + 2Ld + C),
	\end{align*}
	which, along with Lemma~\ref{lem:basics}, concludes the proof.

\clearpage
\section{Proof of Corollary~\ref{cor:wcvx}}

From \eqref{eq:evi}, for all $j=0,\ldots,k$ we get
	\begin{equation}\label{eq:23}
	2\gamma \left[
	\cF(\mu_{y_0^j}) - \cF(\mu^\star)
	\right]
	\leq 
	W^2(\mu_{x^j}, \mu^\star) - W^2(\mu_{x^{j+1}}, \mu^\star)
	+ \gamma^2(2\sigma_F^2 + 2Ld + C).
	\end{equation}
	Summing up \eqref{eq:23} for $j=0,\ldots,k$ leads to
	\begin{align*}
		2\gamma\sum_{j=0}^k\left[
			\cF(\mu_{y_0^j}) - \cF(\mu^\star)
		\right]
		&\leq
		W^2(\mu_{x^0}, \mu^\star) - W^2(\mu_{x^{k+1}}, \mu^\star)
		+ \gamma^2(k+1)(2\sigma_F^2 + 2Ld + C)\\
		&\leq
		W^2(\mu_{x^0}, \mu^\star) + \gamma^2(k+1)(2\sigma_F^2 + 2Ld + C).
	\end{align*}
	Using Lemma~\ref{lem:basics} and the convexity of $\KL$ divergence~\cite[Theorem 11]{van2014renyi}, $\cF$ is convex on $\cP_2(\bR^d)$. Since $\mu_{\hat{x}^k} = \frac{1}{k+1} \sum_{j=0}^k \mu_{y_0^j}$,
	\begin{align*}
		\cF(\mu_{\hat{x}^k}) \leq \frac{1}{k+1}\sum_{j=0}^k
		\cF(\mu_{y_0^j}),
	\end{align*}
	hence
	\begin{equation}\label{eq:25}
		\cF(\mu_{\hat{x}^k}) - \cF(\mu^\star)
		\leq
		\frac{1}{2\gamma(k+1)}W^2(\mu_{x^0}, \mu^\star) + \frac{\gamma}{2}(2\sigma_F^2 + 2Ld + C).
	\end{equation}
    Hence, given any $\varepsilon > 0$, choosing stepsize 
$
	\gamma = \min\left\{\tfrac{1}{L}, \tfrac{\varepsilon}{2\sigma_F^2 + 2Ld + C}\right\}
$
	leads to $$\frac{\gamma}{2}(2\sigma_F^2 + 2Ld + C) \leq \frac{\varepsilon}{2}. $$ 
	If the number of iterations is
	\begin{equation*}
		k+1 \geq \max\left\{\tfrac{L}{\varepsilon}, \tfrac{2\sigma_F^2 + 2Ld + C}{\varepsilon^2}\right\}W^2(\mu_{x^0}, \mu^\star),
	\end{equation*}
	then,
$$
		\frac{1}{2\gamma(k+1)}W^2(\mu_{x^0}, \mu^\star) \leq \frac{\varepsilon}{2}.
$$
This implies $\cF(\mu_{\hat{x}^k}) - \cF(\mu^\star) \leq \varepsilon$, and the proof is concluded by applying Lemma~\ref{lem:basics}.

\clearpage
\section{Proof of Corollary~\ref{cor:scvx}}

	From \eqref{lem:basics}, $\cF(\mu_{y_0^j}) \geq \cF(\mu^\star)$. From~\eqref{eq:evi}, for all $j=0,\ldots,k-1$ we get
	\begin{align*}
		W^2(\mu_{x^{j+1}}, \mu^\star)
		\leq
		(1 - \gamma \alpha) W^2(\mu_{x^j}, \mu^\star)
		+
		\gamma^2(2\sigma_F^2 + 2Ld + C).
	\end{align*}
	After unrolling this recurrence we get
	\begin{align*}
		W^2(\mu_{x^k}, \mu^\star) 
		&\leq
		(1-\gamma\alpha)^k W^2(\mu_{x^0}, \mu^\star)
		+
		\gamma^2(2\sigma_F^2 + 2Ld + C)\sum_{j=0}^{k-1} (1-\gamma\alpha)^j\\
		&=
		(1-\gamma\alpha)^k W^2(\mu_{x^0}, \mu^\star)
		+
		\gamma^2(2\sigma_F^2 + 2Ld + C)\frac{1-(1-\gamma\alpha)^k}{\gamma\alpha}\\
		&\leq
		(1-\gamma\alpha)^k W^2(\mu_{x^0}, \mu^\star)
		+
		\frac{\gamma(2\sigma_F^2 + 2Ld + C)}{\alpha}.
	\end{align*}
	The first part is proven.
	Setting $\gamma = \min\left\{\tfrac{1}{L}, \tfrac{\varepsilon\alpha}{2(2\sigma_F^2 + 2Ld + C)}\right\}$ gives
	\begin{equation}\label{eq:20}
		W^2(\mu_{x^k}, \mu^\star)
		\leq
		(1-\gamma\alpha)^k W^2(\mu_{x^0}, \mu^\star)
		+
		\frac{\varepsilon}{2}.
	\end{equation}
	If
	\begin{equation}
	k \geq \frac{1}{\gamma\alpha}\log\left(\frac{2W^2(\mu_{x^0}, \mu^\star)}{\varepsilon}\right),
	\end{equation}
	then,
	\begin{equation}\label{eq:22}
		(1-\gamma\alpha)^k W^2(\mu_{x^0}, \mu^\star) \leq \frac{\varepsilon}{2},
	\end{equation}
	which concludes the proof.

\clearpage
\section{Proof of Corollary~\ref{cor:scvx-kl}}

	From \eqref{eq:evi}, for all $j = 0,\ldots,k$ we get
	\begin{equation}\label{eq:28}
	2\gamma \left[ \cF(\mu_{y_0^j}) - \cF(\mu^\star) \right] \leq 	(1-\gamma\alpha)W^2(\mu_{x^j}, \mu^\star) - W^2(\mu_{x^{j+1}}, \mu^\star)
	+ \gamma^2(2\sigma_F^2 + 2Ld + C).
	\end{equation}
	By dividing \eqref{eq:28} by $(1-\gamma\alpha)^{j}$ we get
	\begin{equation}\label{eq:29}
	\frac{2\gamma}{(1-\gamma\alpha)^{j}} \left[ \cF(\mu_{y_0^{j}}) - \cF(\mu^\star) \right]
	\leq
	\frac{W^2(\mu_{x^j}, \mu^\star)}{(1-\gamma\alpha)^{j-1}} - \frac{W^2(\mu_{x^{j+1}}, \mu^\star)}{(1-\gamma\alpha)^{j}} + \frac{\gamma^2(2\sigma_F^2 + 2Ld + C)}{(1-\gamma\alpha)^{j}}.
	\end{equation}
	Summing up \eqref{eq:29} for $j=0,\ldots,k$ gives
	\begin{align}
	\sum_{j=0}^k \frac{2\gamma}{(1-\gamma\alpha)^{j}} \left[ \cF(\mu_{y_0^j}) - \cF(\mu^\star) \right]
	&\leq
	(1 - \gamma \alpha)W^2(\mu_{x^0}, \mu^\star) - \frac{W^2(\mu_{x^{k+1}}, \mu^\star)}{(1-\gamma\alpha)^{k}}
	+
	\sum_{j=0}^k \frac{\gamma^2(2\sigma_F^2 + 2Ld + C)}{(1-\gamma\alpha)^{j}}\\
	&\leq
	(1 - \alpha \gamma)W^2(\mu_{x^0}, \mu^\star)
	+
	\sum_{j=0}^k \frac{\gamma^2(2\sigma_F^2 + 2Ld + C)}{(1-\gamma\alpha)^{j}}.
	\end{align}
	Using Lemma~\ref{lem:basics} and the convexity of $\KL$ divergence~\cite[Theorem 11]{van2014renyi}, $\cF$ is convex on $\cP_2(\bR^d)$. Since $$\mu_{\tilde{x}^k} = \sum_{j=0}^k \frac{(1-\gamma\alpha)^{-j}}{\sum_{r=0}^k (1-\gamma\alpha)^{-r}}\mu_{x^j},$$
	\begin{equation}
		\cF(\mu_{\tilde{x}^k})\sum_{j=0}^k (1-\gamma\alpha)^{-j} = \sum_{j=0}^k (1-\gamma\alpha)^{-j}\cF(\mu_{\tilde{x}^k}) \leq \sum_{j=0}^k  (1-\gamma\alpha)^{-j} \cF(\mu_{x^j}),
	\end{equation}
	hence
	\begin{equation}
	2\gamma \sum_{j=0}^k (1-\gamma\alpha)^{-j} \left[\cF(\mu_{\tilde{x}^k}) - \cF(\mu^\star)\right]
	\leq
	(1 - \alpha \gamma)W^2(\mu_{x^0}, \mu^\star)
	+
	\sum_{j=0}^k \frac{\gamma^2(2\sigma_F^2 + 2Ld + C)}{(1-\gamma\alpha)^{j}}.
	\end{equation}
	After dividing by $2\gamma\sum\limits_{j=0}^k (1-\gamma\alpha)^{-j}$ we obtain
	\begin{equation}\label{eq:30}
	\cF(\mu_{\tilde{x}^k}) - \cF(\mu^\star)
	\leq
	\frac{W^2(\mu_{x^0}, \mu^\star)}{2\gamma \sum_{j=0}^k (1-\gamma\alpha)^{-(j+1)} } + \frac{\gamma(2\sigma_F^2+2Ld+C)}{2}.
	\end{equation}
	Now, we perform a simplification of the sum:
	\begin{align}
	\sum_{j=0}^k\gamma(1-\gamma\alpha)^{-(j+1)}
	&=
	\frac{\gamma}{(1-\gamma\alpha)} \sum_{j=0}^k (1-\gamma\alpha)^{-j}
	=
	\frac{\gamma}{(1-\gamma\alpha)} \cdot \frac{(1-\gamma\alpha)^{-(k+1)} - 1}{(1-\gamma\alpha)^{-1}-1}\\
	&=
	\frac{(1-\gamma\alpha)^{-(k+1)} - 1}{\alpha}.
	\end{align}
	Plugging this into \eqref{eq:30} gives
	\begin{align}
	\cF(\mu_{\tilde{x}^k}) - \cF(\mu^\star)
	&\leq
	\frac{\alpha W^2(\mu_{x^0}, \mu^\star)}{2((1-\gamma\alpha)^{-(k+1)}-1)} + \frac{\gamma(2\sigma_F^2+2Ld+C)}{2}\\
	&=
	\alpha\left[\frac{W^2(\mu_{x^0}, \mu^\star)}{2}\cdot\frac{(1-\gamma\alpha)^{k+1}}{1 - (1-\gamma\alpha)^{k+1}} + \frac{\gamma(2\sigma_F^2+2Ld+C)}{2\alpha}\right].
	\end{align}
	If $\gamma = \min \left\{ \tfrac{1}{L}, \tfrac{\varepsilon\alpha}{2\sigma_F^2 + 2Ld + C} \right\}$ and
	\begin{equation*}
	k \geq \max \left\{\tfrac{L}{\alpha}, \tfrac{2\sigma_F^2 + 2Ld + C}{\varepsilon \alpha^2}  \right\} \log \left(2\max\left\{1, \tfrac{W^2(\mu_{x^0}, \mu^\star) }{\varepsilon} \right\}\right),
	\end{equation*}
	then $k \geq \frac{1}{\gamma\alpha}\log2$. Moreover, $(1-\gamma\alpha)^{k+1} \leq 1/2$, 
	\begin{align}
	\cF(\mu_{\tilde{x}^k}) - \cF(\mu^\star)
	\leq
	\alpha\left[(1-\gamma\alpha)^{k+1}W^2(\mu_{x^0}, \mu^\star) + \frac{\gamma(2\sigma_F^2+2Ld+C)}{2\alpha}\right],
	\end{align}
	and
	\begin{equation}
	\cF(\mu_{\tilde{x}^k}) - \cF(\mu^\star)
	\leq
	\alpha\left[(1-\gamma\alpha)^{k+1}W^2(\mu_{x^0}, \mu^\star) + \frac{\varepsilon}{2} \right].
	\end{equation}
	The conclusion follows from Lemma~\ref{lem:basics}.

\clearpage
\section{Additional Numerical Experiments}

\begin{figure}[ht!]
\[
  \begin{array}{cc}
  \includegraphics[width=.4\linewidth]{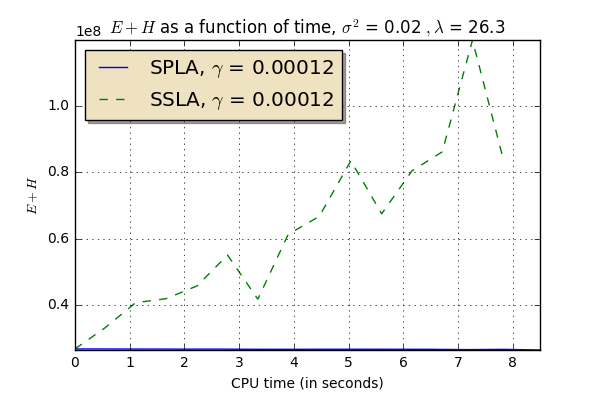} &
  \includegraphics[width=.4\linewidth]{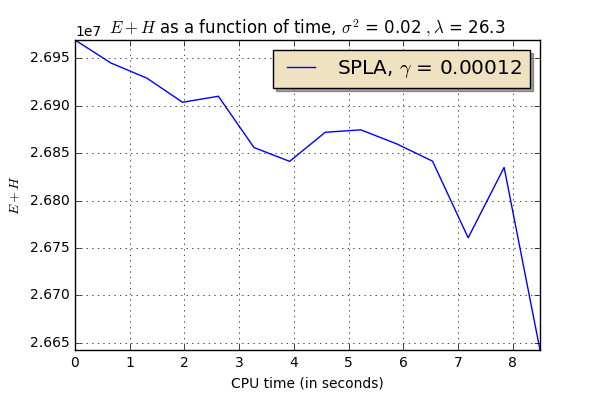}\\
    \includegraphics[width=.4\linewidth]{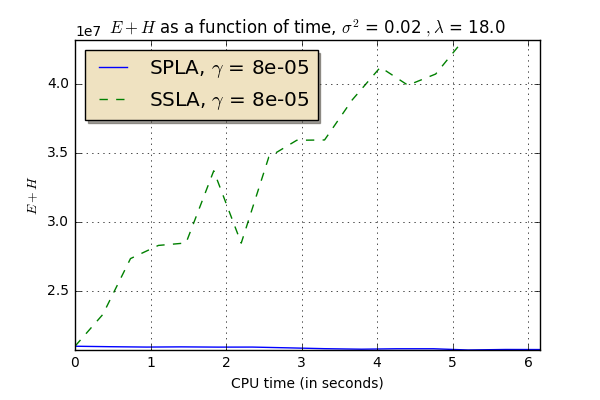} &
  \includegraphics[width=.4\linewidth]{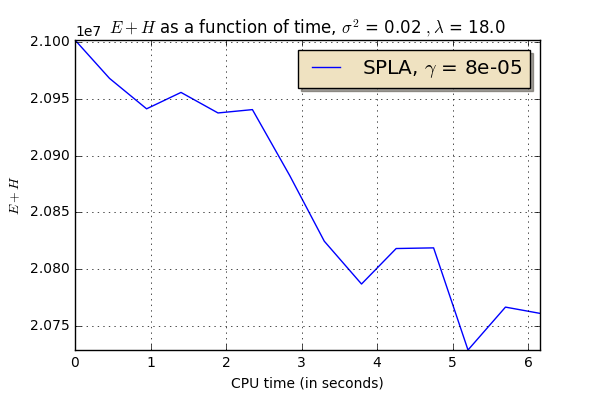}
  \end{array}
\]

    \caption{{\bf Top row:} The functional $\cF = \cH + \cE_U$ as a function of CPU time for the Amazon graph with $Y \sim N(0,I)$. Left: SSLA and SPLA. Right: Only SPLA. {\bf Bottom row:} The functional $\cF = \cH + \cE_U$ as a function of CPU time for the DBLP graph with $Y \sim N(0,I)$. Left: SSLA and SPLA. Right : Only SPLA.}
    \label{fig:add}
\end{figure}

These numerical experiments are conducted over the Amazon and the DBLP graphs. The left curves show the numerical stability of the proximal method (SPLA) with respect to the subgradient method (SSLA). The right curves are zoomed in view of the behavior of SPLA during the same experiments. It can be seen that SPLA still decreases the KL divergence. 

\end{document}